\documentclass[letterpaper]{article}
\usepackage[preprint]{aaai2027}
\usepackage[table,dvipsnames]{xcolor}
\usepackage{adjustbox}
\usepackage{multirow}
\usepackage{makecell}
\usepackage{enumitem}
\usepackage{array}
\usepackage{tabularx}
\newcolumntype{Y}{>{\raggedright\arraybackslash}X}

\newcommand{\std}[1]{\textsubscript{\textcolor{gray}{\(\pm\)#1}}}
\def\redc{\cellcolor[HTML]{FF999A}}
\def\orangec{\cellcolor[HTML]{FFCC99}}
\def\yellowc{\cellcolor[HTML]{FFF8AD}}
\usepackage[most]{tcolorbox}
\newcommand{\ours}{GROM}
\usepackage[hyphens]{url}
\usepackage{graphicx}
\usepackage{amsthm}
\usepackage{natbib}
\usepackage{caption}
\newtheorem{theorem}{Theorem}

\newtheorem{remark}{Remark}
\newtheorem{apxtheorem}{Theorem}[section]
\newtheorem{apxlemma}[apxtheorem]{Lemma}

\usepackage{algorithm}
\usepackage{algorithmic}
\usepackage{graphicx, amsmath, amssymb}
\usepackage{mathtools}
\newcommand{\reqyes}{\cellcolor[HTML]{E8F5E9}\textcolor[HTML]{1B7F3A}{\ensuremath{\checkmark}}}
\newcommand{\reqno}{\cellcolor[HTML]{FCE8E6}\textcolor[HTML]{B3261E}{\ensuremath{\times}}}
\newcommand{\reqhead}[1]{\textbf{\scriptsize #1}}
\usepackage{newfloat}
\usepackage{listings}
\DeclareCaptionStyle{ruled}{labelfont=normalfont,labelsep=colon,strut=off}
\floatstyle{ruled}
\newfloat{listing}{tb}{lst}{}
\floatname{listing}{Listing}

\usepackage{booktabs}
\usepackage{arydshln}
\title{GROM: \underline{G}radient-Free \underline{R}apid \underline{O}ne-Shot \underline{M}achine Unlearning}

\author{
    Paweł Batorski\textsuperscript{\rm 1},
    Przemysław Spurek\textsuperscript{\rm 2,\rm 3},
    Paul Swoboda\textsuperscript{\rm 1}
}
\affiliations{
    \textsuperscript{\rm 1}Heinrich Heine University Düsseldorf\\
    \textsuperscript{\rm 2}Jagiellonian University\\
    \textsuperscript{\rm 3}IDEAS Research Institute
}

\begin{document}

\makeatletter
\let\grom@aaai@maketitle\@maketitle
\def\@maketitle{%
  \grom@aaai@maketitle
  {\centering
  \makebox[\textwidth]{%
    \includegraphics[height=0.37\textwidth]{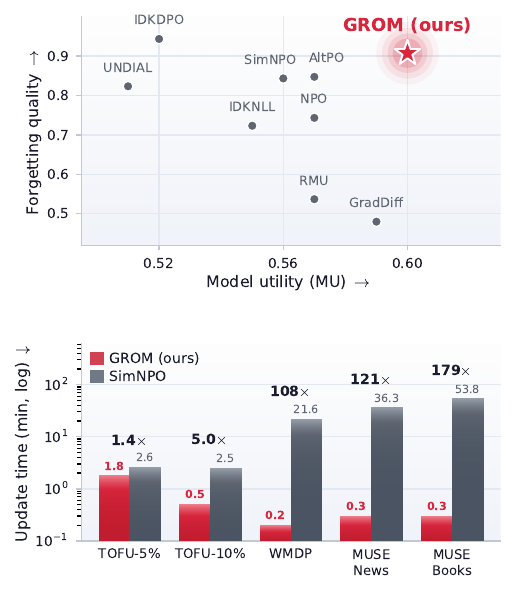}%
    \hfill
    \includegraphics[height=0.37\textwidth]{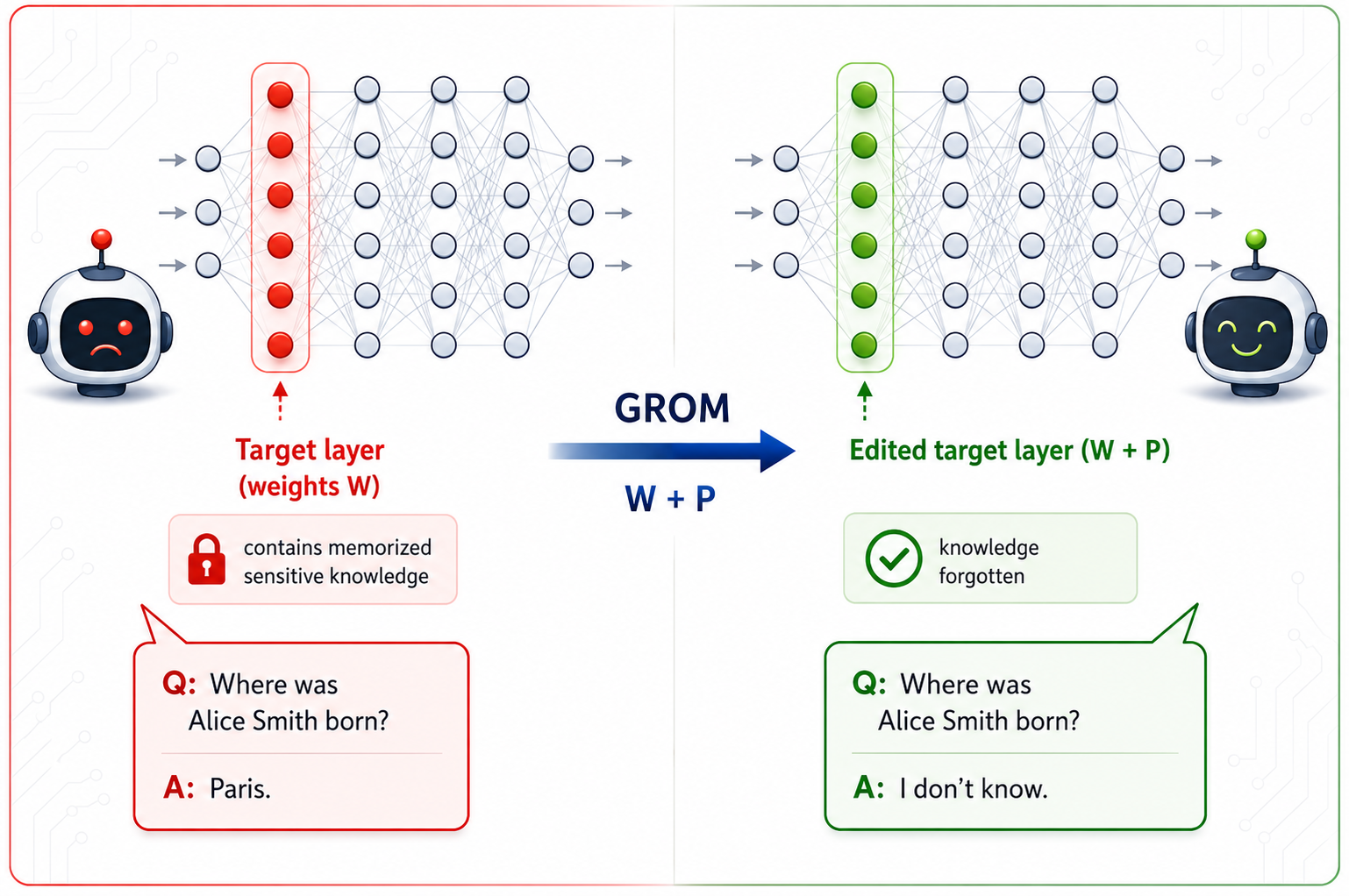}%
  }\par
  \captionof{figure}{\textbf{\ours{} is fast and Pareto-best.} \emph{Left, top:} on TOFU-10\%,
  \ours{} (red star) reaches the top-right corner, strong forgetting and the highest
  model utility. \emph{Left, bottom:} its
  closed-form edit applies in seconds, up to $\sim\!180\times$ faster than SimNPO, a gap that
  widens on the larger 7--8B corpus benchmarks (single NVIDIA H100). \emph{Right:} \ours{}
  replaces iterative fine-tuning with a single gradient-free closed-form edit $W\!+\!P$ to one
  target layer, erasing the targeted knowledge in one step.}
  \label{fig:teaser}
  \par}
}
\makeatother

\maketitle

\begin{abstract}
Machine unlearning has become a critical capability for safely removing specific, sensitive knowledge from large language models (LLMs). Current state-of-the-art approaches primarily rely on iterative, training-time unlearning via fine-tuning. However, even when utilizing parameter-efficient dimensionality reduction techniques like LoRA, gradient-based optimization remains computationally expensive and lacks explicit analytical formulations. It can also leave the targeted knowledge merely hidden rather than removed, to the point that simply quantizing the unlearned model restores much of what it was supposed to have erased. To resolve this, we propose a novel one-shot unlearning approach, abandoning iterative optimization in favor of a direct, exact analytical solution. We frame the unlearning process as a ridge-regularized least-squares optimization problem, deriving a closed-form additive update for targeted weight matrices. This update forces the selected layer to suppress unwanted content while strictly preserving its behavior on retained data. Computed from gradient-free forward passes alone, with no backpropagation and no iteration to convergence, GROM applies the weight edit in mere seconds, which makes it orders of magnitude faster than traditional fine-tuning. Extensive evaluations demonstrate that GROM achieves state-of-the-art forgetting-utility trade-offs on TOFU-5\%, TOFU-10\%, MUSE-Books, MUSE-News and WMDP, significantly reducing computational overhead without sacrificing overall model performance. Because the update removes the targeted content from the weights instead of masking it, GROM also withstands the low-bit quantization attack that recovers much of the content a gradient-based baseline had appeared to forget.
Our code is publicly available at \url{https://github.com/Batorskq/GROM}.

\end{abstract}


\section{Introduction}

Large Language Models (LLMs) frequently memorize sensitive, private, or copyrighted information from their vast training corpora. Consequently, machine unlearning has emerged as a crucial mechanism to selectively erase this targeted knowledge. Existing interventions span a continuum from inference-time mitigation to parameter editing and training-time updates \citep{liu2025rethinking,ren2025sok}. Mitigation strategies, such as decoding controls or prompt-based defenses \citep{yu2021differentially,huang2024offset,thaker2024guardrail}, act as lightweight shields that deflect rather than physically remove the underlying information. Conversely, localized parameter editing methods \citep{ilharco2022editing, meng2022rome} offer rapid updates for specific associations but often struggle to scale effectively to broad, distributional forget sets.

Currently, training-time unlearning provides the strongest performance by optimizing model parameters to reduce the likelihood of the forget set while preserving retain-set utility. This encompasses a wide range of objectives, including gradient ascent \citep{ga, yao2024llmunlearning}, reverse KL divergence \citep{rkld}, and preference-style optimization \citep{dpo, npo, simnpo}. While effective, these methods inherently rely on iterative fine-tuning. Even when utilizing parameter-efficient dimensionality reduction techniques like LoRA, gradient-based optimization remains computationally expensive. More troubling, the forgetting it produces can be superficial: recent work shows that simply quantizing an unlearned model restores much of the content it was supposed to have removed \citep{zhang2025quantfail}, which suggests that fine-tuning often hides the targeted knowledge in low-magnitude weight adjustments that low-bit rounding undoes. Crucially, these approaches also lack an explicit analytical formulation, forcing practitioners to rely on costly, step-by-step gradient descent to approximate an unlearned state.

To address these fundamental inefficiencies, we introduce GROM (Gradient-free Rapid One-shot Machine-unlearning), a new unlearning formulation that abandons iterative optimization in favor of a direct, exact analytical solution. Instead of training additional parameters over multiple epochs, GROM computes a precise, additive update to targeted weight matrices in a single step. We frame the unlearning process as a ridge-regularized least-squares optimization problem, forcing a selected layer to suppress unwanted content while strictly maintaining its output on retained data. By solving this system in closed form, GROM computes the optimal weight update from gradient-free forward passes over the forget and retain data, one pair per edited layer, executing the entire unlearning procedure in mere seconds.

The main contributions of this work are threefold:
\begin{itemize}
    \item \textbf{Closed-Form Unlearning Formulation:} We mathematically frame machine unlearning as an exact optimization problem with a closed-form analytical solution, completely bypassing the computational overhead and optimization instability of iterative fine-tuning.
    \item \textbf{Gradient-Free Efficiency:} GROM computes each layer's optimal additive update from gradient-free forward passes alone, with no backpropagation and no iteration to convergence, which makes it up to two orders of magnitude faster than gradient-based approaches.
    \item \textbf{State-of-the-Art Trade-offs:} GROM achieves Pareto-best forgetting-utility trade-offs across five benchmarks, and it withstands the low-bit quantization attack that restores much of what a strong gradient-based baseline had appeared to forget, staying as forgetful as the gold retrained model.
\end{itemize}

\begin{table}[ht]
\centering
\scriptsize
\setlength{\tabcolsep}{4pt}
\renewcommand{\arraystretch}{0.82}
\begin{adjustbox}{max width=\columnwidth}
\begin{tabular}{lccc}
\toprule
\multicolumn{1}{c}{\textbf{Method}} &
\multicolumn{3}{c}{\textbf{Desirable requirement satisfied}} \\
\cmidrule(lr){2-4}
& \reqhead{Closed-form} & \reqhead{Ref.-free} & \reqhead{No teacher} \\
\midrule
GA~\citep{ga}       & \reqno & \reqyes & \reqyes \\
GradDiff~\citep{yao2024large} & \reqno & \reqyes & \reqyes \\
IDKDPO~\citep{maini2024tofu}   & \reqno & \reqno  & \reqyes \\
IDKNLL~\citep{maini2024tofu}   & \reqno & \reqyes & \reqyes \\
RMU~\citep{wmdp}      & \reqno & \reqyes & \reqyes \\
RKLD~\citep{rkld}     & \reqno & \reqyes & \reqno  \\
NPO~\citep{npo}      & \reqno & \reqno  & \reqyes \\
AltPO~\citep{mekala2025altpo}    & \reqno & \reqno  & \reqyes \\
UNDIAL~\citep{undial}   & \reqno & \reqyes & \reqno  \\
SimNPO~\citep{simnpo}   & \reqno & \reqyes & \reqyes \\
\rowcolor[HTML]{F2F7FF}
\ours{}  & \reqyes & \reqyes & \reqyes \\
\bottomrule
\end{tabular}
\end{adjustbox}
\caption{Qualitative comparison of method requirements, where a check marks a satisfied
property. \emph{Closed-form} means the update is a direct analytical solve, one per edited
layer, rather than iterative gradient optimization, which implies no backpropagation, no
iteration to convergence, and seconds rather than minutes (Time column of
Tables~\ref{tab:tofu}--\ref{tab:muse-unlearning}). \emph{Ref.-free} and \emph{No teacher}
mean no reference model and no sanitized teacher are required.}
\label{tab:qualitative-requirements}
\end{table}

\section{Related Work}
LLM unlearning spans several intervention levels, and recent surveys emphasize
that behavioural suppression, parameter modification, and deletion guarantees
should not be conflated \citep{liu2025rethinking, ren2025sok}. These objectives
descend from exact and certified deletion for smaller models
\citep{cao2015machineunlearning, bourtoule2021sisa, ginart2019deletion,
guo2019certified, izzo2021approximate}, a line motivated by right-to-be-forgotten
provisions \citep{rosen2011right, hoofnagle2019european}. Some methods
avoid direct weight edits and instead alter access to unwanted content through
privacy-oriented fine-tuning, logit or offset steering, guardrails, or
in-context control \citep{yu2021differentially, huang2024offset,
thaker2024guardrail, pawelczyk2023context}. These approaches can be
lightweight, but the forgetting often depends on an external inference-time
mechanism. Another line edits parameters more directly, including task
arithmetic and factual editing methods such as ROME and MEMIT
\citep{ilharco2022editing, meng2022rome, meng2023memit, hase2023localization}. Such edits are fast
and localized, but are often designed for specific associations rather than
distributional forget sets involving many tokens and a competing retain set.
Although \ours{} is similarly localized, it targets this broader forget-retain
setting rather than single-fact replacement. Most empirical LLM unlearning instead optimizes a forget-retain loss by
fine-tuning. Prior objectives include KL and distillation-style retention
\citep{rkld, undial}, refusal or ``I don't know'' targets
\citep{maini2024tofu}, gradient-ascent, gradient-difference, and continual
unlearning \citep{ga, yao2024large, liu2022continual}, and preference-style
variants \citep{dpo, npo, simnpo, mekala2025altpo, jia2024soul, ji2024reversing, chen2023unlearn}. This literature shows that
the target and loss shape the forgetting-utility trade-off, but iterative
optimization is costly and sensitive. \ours{} keeps the target-design view
while replacing fine-tuning with a closed-form one-shot update. Evaluation work further shows that benchmark success need not imply durable
deletion, documenting over-unlearning, reversibility, relearning, and
deployment-specific failures such as quantization sensitivity
\citep{xu2025reversibility, zhang2025quantfail, hu2025jogging,
thaker2025weakbenchmarks, chundawat2024fragile, duan2024membership}. In contrast to
procedures that can be fragile under post-edit compression, \ours{}'s
closed-form update is naturally resilient to quantization attacks.

\begin{figure*}[ht]
\centering
\includegraphics[width=0.92\textwidth]{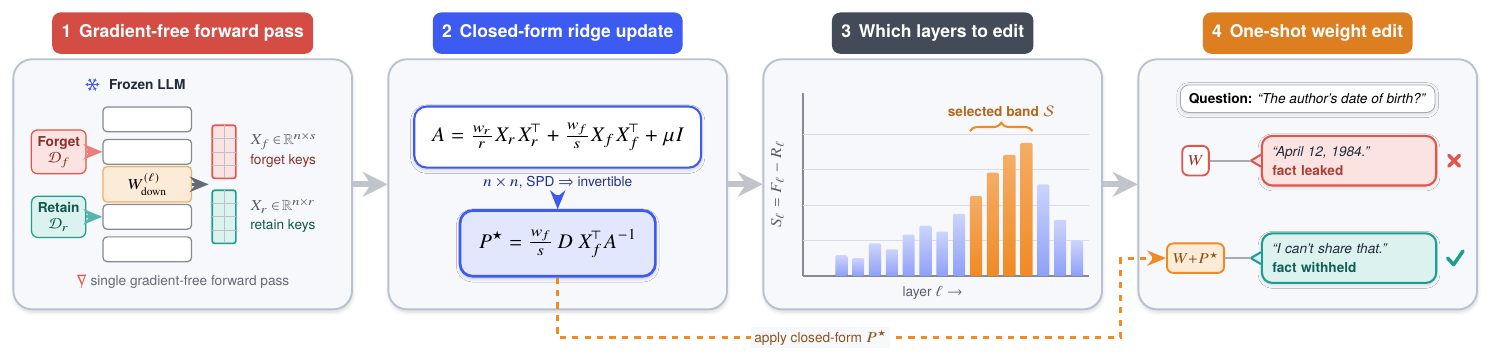}
\caption{Overview of \ours{}. From gradient-free forward passes over the forget
and retain data we collect the per-token keys $X_f,X_r$ (Stage~1). We then solve a single
ridge-regularized least-squares problem in closed form for the additive update $P^\star$
(Stage~2). A logit-lens attribution score $S_\ell=F_\ell-R_\ell$ selects a small late band
of layers $\mathcal{S}$ to edit (Stage~3). Applying $W\!\leftarrow\!W+P^\star$ to those
layers yields the unlearned model, which withholds the memorized content on the same
prompt while leaving retain behaviour intact (Stage~4).}
\label{fig:method}
\end{figure*}

\section{Method}

In this section, we formally define the machine unlearning task as an exact optimization problem. We first establish the problem setup and our ridge-regularized least-squares objective, which admits a unique closed-form solution. We then detail the design of specific unlearning targets based on token suppression and representation corruption. Finally, we describe our logit-lens layer selection strategy and introduce an exact analytical method for auditing the influence of individual deletion requests.

\paragraph{Setup and Editable Matrices.}
Let $W\in\mathbb{R}^{m\times n}$ be a weight matrix we edit, either the language-model head or the down-projection of an MLP block. With one gradient-free forward pass over the forget data and one over the retain data, we collect the \emph{inputs} to $W$, the per-token ``keys'', and stack them column-wise into:
$$ X_f\in\mathbb{R}^{n\times s}\ (\text{forget keys}),\qquad X_r\in\mathbb{R}^{n\times r}\ (\text{retain keys}). $$
A key is taken at every position whose next token should be forgotten (respectively preserved): the answer tokens of a forget/retain QA pair, or, when the forget set is an unlabeled corpus, every token position of that corpus. The layer's current outputs on these keys are $WX_f$ and $WX_r$.

Importantly, adding $\Delta$ to $W$ shifts every output from $Wx$ to $Wx+\Delta x$. We control that shift only if the model uses $Wx$ directly, either by adding it to the residual stream or by reading it as logits. This holds true for the MLP down-projection, the attention output projection, and the LM head, but not for projections whose outputs first pass through a nonlinearity (e.g., up- and gate-projections). Among the usable matrices, we edit the MLP down-projection, since feed-forward blocks are widely reported to store facts and push them toward particular output tokens \citep{geva2021kv, geva2022promoting, dai2022knowledge}. Table~\ref{tab:ablations} checks the alternatives empirically.

\paragraph{Objective and Closed-Form Solution.}
We seek a single additive update $P\in\mathbb{R}^{m\times n}$, applied in closed form ($W\leftarrow W+P$), that simultaneously preserves the retain behavior, $PX_r\approx 0$ (so $(W+P)X_r\approx WX_r$), and steers the forget outputs by a prescribed \emph{unlearning target} $D\in\mathbb{R}^{m\times s}$ that removes the memorized content, $(W+P)X_f\approx WX_f + D$. Here, $D$ specifies the desired output-space displacement for each forget feature.

We frame this trade-off as a ridge-regularized least-squares problem:
$$ \min_{P\in\mathbb{R}^{m\times n}}\; \frac{w_r}{r}\,\|PX_r\|_F^2 +\frac{w_f}{s}\,\|PX_f-D\|_F^2 +\mu\,\|P\|_F^2 $$
where $w_r,w_f>0$ weight retain preservation against forget steering, and $\mu>0$ penalizes the edit size. The target $D$ is any fixed matrix in $\mathbb{R}^{m\times s}$, chosen before solving for $P$. The optimizer of this problem is unique and given in closed form, avoiding iterative approximation entirely.

\begin{theorem}
\label{thm:closed-form}
Let $X_r\in\mathbb{R}^{n\times r}$, $X_f\in\mathbb{R}^{n\times s}$, $D\in\mathbb{R}^{m\times s}$, and let $w_r,w_f,\mu>0$. Define
$$ A = \frac{w_r}{r}X_rX_r^\top + \frac{w_f}{s}X_fX_f^\top + \mu I_n \in\mathbb{R}^{n\times n}. $$
Then the ridge-regularized objective is minimized by the unique matrix
$$ P^\star = \frac{w_f}{s}\, D X_f^\top A^{-1}. $$
\end{theorem}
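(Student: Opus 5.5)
The plan is to treat the objective $J(P)=\frac{w_r}{r}\|PX_r\|_F^2+\frac{w_f}{s}\|PX_f-D\|_F^2+\mu\|P\|_F^2$ as a strictly convex quadratic in $P$. I will then identify its unique minimizer from the first-order condition. Throughout, I use the trace form $\|M\|_F^2=\operatorname{tr}(MM^\top)$ and expand each term. This gives
\[
J(P)=\operatorname{tr}\!\left(PAP^\top\right)-\frac{2w_f}{s}\operatorname{tr}\!\left(PX_fD^\top\right)+\frac{w_f}{s}\|D\|_F^2 ,
\]
where $A$ is exactly the matrix defined in Theorem~\ref{thm:closed-form}. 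The only care needed in this step is collecting the cross term correctly. Since $\operatorname{tr}(DX_f^\top P^\top)=\operatorname{tr}(PX_fD^\top)$, the two cross terms merge into one.

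Next I establish that $A$ is symmetric positive definite. Both $X_rX_r^\top$ and $X_fX_f^\top$ are symmetric positive semidefinite. Their coefficients $w_r/r$ and $w_f/s$ are positive, and $\mu I_n$ with $\mu>0$ adds $\mu$ to every eigenvalue. Hence $v^\top Av\ge\mu\|v\|^2>0$ for all $v\neq0$, and in particular $A$ is invertible. This is the step that makes the statement well-posed. It is also the only place where the hypothesis $\mu>0$ is genuinely used: without it, $A$ could be singular, for example when $r+s<n$.

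For the minimization itself I will complete the square rather than differentiate, since that gives existence, optimality and uniqueness in one stroke. Set $P^\star=\frac{w_f}{s}DX_f^\top A^{-1}$, so that $P^\star A=\frac{w_f}{s}DX_f^\top$. Write $P=P^\star+E$ for an arbitrary $E$. Substituting and using the symmetry of $A$, the terms linear in $E$ are $2\operatorname{tr}(EAP^{\star\top})-\frac{2w_f}{s}\operatorname{tr}(EX_fD^\top)$. These cancel, because $AP^{\star\top}=(P^\star A)^\top=\frac{w_f}{s}X_fD^\top$. Hence
\[
J(P^\star+E)=J(P^\star)+\operatorname{tr}\!\left(EAE^\top\right).
\]
The last term equals $\sum_i e_i^\top A e_i$ over the rows $e_i$ of $E$. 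By positive definiteness it is nonnegative, and it vanishes only when $E=0$. Therefore $P^\star$ is the unique global minimizer.

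I do not expect a serious obstacle here. The main point of care is the bookkeeping of transposes in the cross term, specifically verifying that $AP^{\star\top}=\frac{w_f}{s}X_fD^\top$, which relies on $A=A^\top$. As a sanity check, I will also compute the gradient $\nabla_PJ=2PA-\frac{2w_f}{s}DX_f^\top$ and confirm that setting it to zero yields the same $P^\star$. This independently confirms the normal equation $P^\star A=\frac{w_f}{s}DX_f^\top$.
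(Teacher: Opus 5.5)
Your proposal is correct and follows essentially the same route as the paper: the same trace expansion into $\operatorname{tr}(PAP^\top)-\frac{2w_f}{s}\operatorname{tr}(PX_fD^\top)+\text{const}$, the same positive-definiteness lemma for $A$, and the same row-wise argument that $\operatorname{tr}(EAE^\top)>0$ for $E\neq 0$. The only difference is in the last step: the paper sets the gradient $2PA-\frac{2w_f}{s}DX_f^\top$ to zero and then invokes strict convexity, whereas your exact identity $J(P^\star+E)=J(P^\star)+\operatorname{tr}(EAE^\top)$ gives global optimality and uniqueness at once and makes precise the paper's informal remark about the ``second-order change''.
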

\noindent
The full proof of Theorem~\ref{thm:closed-form} is provided in Appendix~\ref{app:proofs}.

\paragraph{Unlearning Targets.}
Theorem~\ref{thm:closed-form} holds for any fixed $D$, so the target is where we encode \emph{what} to forget. We use one of two forms, chosen according to how the benchmark probes the forgotten knowledge.

\emph{(i) Token suppression}, used when forgetting is measured through the content the model \emph{generates} (TOFU, MUSE). For each forget position $j$ with gold next token $g_j\in\{1,\dots,|\mathcal{V}|\}$, we steer the output away from that token:
$$ d_j=-\beta\,\alpha_j\,u_j,\qquad u_j= \begin{cases} e_{g_j}, & W=W_{\mathrm{head}},\\[2mm] \frac{w_{g_j}}{\|w_{g_j}\|_2}, & W=W_{\mathrm{down}}^{(\ell)}, \end{cases} $$
so that the post-edit output $(W+P)x^{(f)}_j\approx Wx^{(f)}_j-\beta\alpha_j u_j$ assigns a lower logit to $g_j$. Here $\beta>0$ is the edit strength, $e_{g_j}$ is the one-hot vector for token $g_j$, and $w_{g_j}=W_{\mathrm{head}}[g_j,:]$ is the LM-head row for that token. For a hidden-layer edit, this row is the direction in residual space that most directly increases the logit of $g_j$. The \emph{specificity weight} $\alpha_j\in[0,1]$ confines suppression to tokens distinctive of the forget set:
$$ \alpha_j=\max\!\Big(0,\;1-\frac{\mathrm{rf}(g_j)}{\mathrm{ff}(g_j)}\Big), $$
where, for any token $v$,
\[
\begin{aligned}
\mathrm{ff}(v)
&=
\frac{\sum_{t\in\mathcal{F}}\mathbf{1}[t=v]}
     {\sum_{v'\in\mathcal{V}}\sum_{t\in\mathcal{F}}\mathbf{1}[t=v']},\\
\mathrm{rf}(v)
&=
\frac{\sum_{t\in\mathcal{R}}\mathbf{1}[t=v]}
     {\sum_{v'\in\mathcal{V}}\sum_{t\in\mathcal{R}}\mathbf{1}[t=v']}.
\end{aligned}
\]
Here $\mathcal{F}$ is the list of all forget tokens, with duplicates, and
$\mathcal{R}$ is the analogous retain-token list. Tokens common to both receive
$\alpha_j\!\approx\!0$, whereas forget-specific tokens receive $\alpha_j\!\approx\!1$.

\emph{(ii) Representation corruption}, used when forgetting is measured by multiple-choice accuracy (WMDP). There, the answer is a choice label rather than memorized text, so lowering content-token logits has little effect. Instead, we corrupt the representation itself. With a single fixed random unit vector $u\in\mathbb{R}^m$, we set:
$$ d_j=c\,u\qquad\text{for all } j, $$
which, through the same objective with the retain keys anchored to $0$, drives the layer's output on forget-like inputs toward a fixed, meaningless direction while leaving retain inputs intact. In both cases, the columns are assembled into $D=[\,d_1,\dots,d_s\,]$, fixed before solving for $P$.

\paragraph{Choice of Layers to be Updated.}
Rather than choosing the edited layers only by grid search, we use a logit-lens attribution score \citep{nostalgebraist2020logitlens, belrose2023tunedlens} to identify layers that directly write the memorized forget tokens. For each MLP layer $\ell$, let $o^{(f)}_{\ell,j}\in\mathbb{R}^{d_{\mathrm{model}}}$ be the output of its down-projection at forget position $j$, and let $w_{g_j}\in\mathbb{R}^{d_{\mathrm{model}}}$ be the LM-head row corresponding to the gold next token $g_j$. The direct contribution of layer $\ell$ to the logit of token $g_j$ is $\langle w_{g_j}, o^{(f)}_{\ell,j}\rangle$. We compute the forget and retain effects:
$$ F_\ell = \frac{\sum_{j=1}^{s}\alpha_j \langle w_{g_j}, o^{(f)}_{\ell,j}\rangle} {\sum_{j=1}^{s}\alpha_j}, \qquad R_\ell = \frac{1}{r}\sum_{i=1}^{r} \langle w_{h_i}, o^{(r)}_{\ell,i}\rangle, $$
where $h_i$ is the retain gold next-token id. We then score each layer by $S_\ell = F_\ell - R_\ell$. A large positive $S_\ell$ indicates that layer $\ell$ contributes more to forget-specific gold-token logits than to retain-token logits. 

We treat the number of edited layers as a small edit-width hyperparameter $k$. For a fixed $k$ over candidate layers $\mathcal{L}_{\mathrm{cand}}$, we choose the contiguous window with the largest average score:
$$ \mathcal{S}_k = \arg\max_{\{a,\ldots,a+k-1\}\subset\mathcal{L}_{\mathrm{cand}}} \frac{1}{k}\sum_{\ell=a}^{a+k-1} S_\ell . $$
For each $\ell\in\mathcal{S}_k$, we compute the closed-form update for $W_{\mathrm{down}}^{(\ell)}$, applying the updates sequentially and recomputing features after each edited layer. Figure~\ref{fig:method} summarizes the full pipeline.

\paragraph{Influence of a Single Forget Example.}
Because $P^\star$ is an explicit function of the forget data, we can further ask how much any \emph{single} forget example contributed to the edit, and answer it exactly. We use this measure to \emph{audit} individual deletion requests. A provider asked to account for a user's data must be able to state what that example contributed to the released model.

Partition the forget keys and target by example, $X_f=[\,C_1\,\cdots\,C_N\,]$ and $D=[\,D_1\,\cdots\,D_N\,]$, where $C_i\in\mathbb{R}^{n\times c_i}$ collects the token positions of example $i$, and let $P^\star_{-i}$ be the update recomputed with example $i$ removed (holding the per-token forget weight $\tfrac{w_f}{s}$ fixed). Deleting example $i$ perturbs the Gram matrix $A$ by a symmetric rank-$c_i$ downdate. The Sherman--Morrison--Woodbury identity \citep{hager1989updating} resolves this analytically.

\begin{theorem}
\label{thm:influence}
Let $A_{-i}=A-\tfrac{w_f}{s}C_iC_i^\top$. Then $A_{-i}$ is symmetric positive definite, and the Sherman--Morrison--Woodbury identity yields the deletion influence of example $i$ in closed form as:
$$ \Delta P_i:=P^\star-P^\star_{-i}=L_iR_i^\top,\qquad R_i=A^{-1}C_i, $$
$$ L_i=\tfrac{w_f}{s}\big(D_i-B_{-i}\,C_i\,M_i^{-1}\big),\quad M_i=I_{c_i}-\tfrac{w_f}{s}C_i^\top A^{-1}C_i, $$
where $B_{-i}=P^\star-\tfrac{w_f}{s}D_iC_i^\top A^{-1}$. Hence $\operatorname{rank}\Delta P_i\le c_i$, and $\Delta P_i$ is obtained from the already-computed $A^{-1}$ by a single $c_i\times c_i$ solve, with no $n\times n$ reinversion and no retraining.
\end{theorem}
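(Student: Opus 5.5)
The plan is to write everything in terms of $\kappa:=\tfrac{w_f}{s}$ and $G:=DX_f^\top$. Then Theorem~\ref{thm:closed-form} gives $P^\star=\kappa G A^{-1}$. Applying the same theorem to the reduced problem gives $P^\star_{-i}=\kappa\,(G-D_iC_i^\top)A_{-i}^{-1}$. This works because removing example $i$ while holding the per-token weight $\kappa$ fixed simply deletes the block $C_i$ from $X_f$ and $D_i$ from $D$. Hence $X_fX_f^\top$ becomes $X_fX_f^\top - C_iC_i^\top$ and $DX_f^\top$ becomes $G-D_iC_i^\top$, since $X_fX_f^\top=\sum_j C_jC_j^\top$ and $DX_f^\top=\sum_j D_jC_j^\top$. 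The proof then has three steps: establish the positive definiteness facts, expand $A_{-i}^{-1}$ by Woodbury, and collect terms.

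\emph{Step 1 (definiteness).} We have
\[
A_{-i}=\tfrac{w_r}{r}X_rX_r^\top+\kappa\sum_{j\neq i}C_jC_j^\top+\mu I_n ,
\]
which is a sum of positive semidefinite matrices plus $\mu I_n$ with $\mu>0$. It is therefore symmetric positive definite, and the reduced objective is of the form covered by Theorem~\ref{thm:closed-form}.

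We also need $M_i$ to be invertible, and this is where the only genuine care is required. We will show $M_i\succ 0$. Since $A\succeq \kappa C_iC_i^\top+\mu I\succ \kappa C_iC_i^\top$, the symmetric matrix $\kappa A^{-1/2}C_iC_i^\top A^{-1/2}$ has all eigenvalues strictly below $1$. This matrix shares its nonzero eigenvalues with $\kappa C_i^\top A^{-1}C_i$. Hence $M_i=I_{c_i}-\kappa C_i^\top A^{-1}C_i\succ 0$. As a cross-check, the matrix determinant lemma gives $\det A_{-i}=\det A\cdot\det M_i>0$.

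\emph{Step 2 (Woodbury).} Write $A_{-i}=A-\kappa C_iC_i^\top=A+U(-\kappa I_{c_i})C_i^\top$ with $U=C_i$. The Sherman--Morrison--Woodbury identity then yields
\[
A_{-i}^{-1}=A^{-1}+\kappa A^{-1}C_i M_i^{-1}C_i^\top A^{-1}=A^{-1}+\kappa R_iM_i^{-1}R_i^\top .
\]
The capacitance matrix appearing here is $(-\kappa)^{-1}I+C_i^\top A^{-1}C_i=-\kappa^{-1}M_i$, which is invertible by Step~1.

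\emph{Step 3 (collecting terms).} By definition $B_{-i}=\kappa(G-D_iC_i^\top)A^{-1}$. Substituting the expansion from Step~2 gives
\[
P^\star_{-i}=B_{-i}+\kappa B_{-i}C_iM_i^{-1}R_i^\top .
\]
Separately, $P^\star=B_{-i}+\kappa D_iC_i^\top A^{-1}=B_{-i}+\kappa D_iR_i^\top$, using $A^{-1}$ symmetric so that $C_i^\top A^{-1}=R_i^\top$. Subtracting the two expressions gives
\[
\Delta P_i=\kappa\big(D_i-B_{-i}C_iM_i^{-1}\big)R_i^\top=L_iR_i^\top .
\]
Since $R_i\in\mathbb{R}^{n\times c_i}$, it follows that $\operatorname{rank}\Delta P_i\le c_i$.

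For the complexity claim, every ingredient is built from the already available $A^{-1}$ and $P^\star$. These ingredients are $R_i=A^{-1}C_i$, the small matrix $C_i^\top R_i$, and $B_{-i}=P^\star-\kappa D_iR_i^\top$. The only inversion required is of the $c_i\times c_i$ matrix $M_i$; in practice this is a single linear solve, for example by Cholesky, since $M_i\succ0$. I expect no real obstacle beyond the sign bookkeeping in the Woodbury expansion and the invertibility of $M_i$ established in Step~1.
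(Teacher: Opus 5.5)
Your proof is correct and follows the paper's route. You expand $A_{-i}^{-1}=A^{-1}+\tfrac{w_f}{s}R_iM_i^{-1}R_i^\top$ by a Woodbury downdate, substitute it into $P^\star_{-i}$, and collect terms using $B_{-i}$ and $P^\star-B_{-i}=\tfrac{w_f}{s}D_iR_i^\top$. The only difference is minor: you prove $M_i\succ 0$ by a congruence/eigenvalue argument and cite Woodbury, whereas the paper proves $M_i$ invertible by a kernel argument using $A_{-i}\succ 0$ and checks the downdate formula by direct multiplication.
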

\noindent
While the same quantity could theoretically be obtained by completely recomputing $P^\star$ without example $i$, Theorem~\ref{thm:influence} ensures computational feasibility. A naive approach would require reforming and reinverting the full $n\times n$ matrix $A$ for every example. The analytical downdate reduces this to a $c_i\times c_i$ solve against a single cached $A^{-1}$, bringing the audit time down to seconds. Gradient-based unlearning admits no analogue and must approximate this with influence functions \citep{koh2017influence} or completely retrain the model per deleted example.

\section{Experiments}

\paragraph{Baselines.}
We compare \ours{} with representative unlearning methods spanning gradient-based,
representation-based, preference-based, and distillation-based approaches.
\textbf{GA}~\citep{ga, maini2024tofu} maximizes the loss on forget examples to reduce
their likelihood, and \textbf{GradDiff}~\citep{yao2024large, maini2024tofu,
liu2022continual} combines that ascent with descent on retain data.
\textbf{TaskVector}~\citep{ilharco2022editing} edits behavior through task-arithmetic
updates in weight space, and \textbf{RMU}~\citep{wmdp} redirects forget representations
toward a fixed random vector while regularizing retain representations.
\textbf{IDKDPO} and \textbf{IDKNLL}~\citep{maini2024tofu} train the model to answer
forget prompts with ``I don't know'' using DPO or NLL, while \textbf{RKLD}~\citep{rkld}
distills from a privacy-sanitized teacher with reverse KL and
\textbf{UNDIAL}~\citep{undial} self-distills with adjusted logits on forget data. Among
preference-style objectives, \textbf{NPO}~\citep{npo} suppresses undesirable forget
responses with a negative-preference loss, \textbf{AltPO}~\citep{mekala2025altpo}
contrasts original against alternate forget answers, and
\textbf{SimNPO}~\citep{simnpo} simplifies NPO by removing the reference model. Because
\ours{} is itself a weight edit, we additionally compare against four locate-then-edit
knowledge editors: \textbf{ROME} and \textbf{MEMIT}~\citep{meng2022rome, meng2023memit},
\textbf{AlphaEdit}~\citep{fang2024alphaedit}, which confines the update to the null space
of the preserved-key covariance, and \textbf{ZeroUnlearn}~\citep{lin2026zerounlearn}, which
remaps forget keys to a neutral state through a multiplicative edit constrained to the null
space of the original forget outputs. All four are given the same per-fact value
optimization and the same retain covariance as \ours{}, so the rows differ only in the
update rule. We compare these editors only on TOFU and ZsRE, where examples can be cast as discrete
facts. The editors require forget data in annotated (subject, relation, object) triples:
they locate a key at the subject and redirect a specific target object. We do not include
them on MUSE or WMDP because those benchmarks provide unstructured corpora or
hazardous-domain question data rather than such triples. \ours{} does not require this structure, since
it can collect keys directly from the forget text.

\paragraph{Experimental Setup} We evaluate \ours{} against established unlearning baselines on TOFU-5\% and TOFU-10\% \citep{maini2024tofu}, MUSE \citep{shi2024muse} and WMDP \citep{wmdp}, scoring all checkpoints with the open-unlearning protocol \citep{dorna2025openunlearning, jin2024rwku}. Target models, per-benchmark metrics, the timing protocol, and the baseline training budgets are given in Appendix~\ref{app:summary}. Throughout, \textbf{Time~(m)} is the wall-clock of the unlearning update on a single NVIDIA H100, excluding model loading and evaluation. It varies across benchmarks because the cost of \ours{} is set by model size and the number of edited layers rather than by forget-set size, the keys being subsampled to a fixed budget, whereas gradient-based baselines scale with epochs times corpus size, so the gap is narrowest on TOFU-5\% and widest on MUSE Books.

\begin{table}[t]
\centering
\scriptsize
\setlength{\tabcolsep}{4pt}
\renewcommand{\arraystretch}{0.82}
\begin{adjustbox}{max width=\columnwidth}
\begin{tabular}{@{}l ccc c !{\vrule} cc@{}}
\toprule
& \multicolumn{3}{c}{Unlearning Efficacy} & Utility & \multicolumn{2}{c}{Summary} \\
\cmidrule(lr){2-4}\cmidrule(lr){5-5}\cmidrule(l){6-7}
Method
& \makecell[c]{1-Rouge-L\\($\uparrow$)}
& \makecell[c]{1-Prob.\\($\uparrow$)}
& \makecell[c]{1-Extr.\\($\uparrow$)}
& \makecell[c]{MU\\($\uparrow$)}
& \makecell[c]{Final\\Score ($\uparrow$)}
& \makecell[c]{Time (m)\\($\downarrow$)} \\
\midrule
\multicolumn{7}{c}{\textbf{TOFU-5\% (LLaMA2-7B-Chat)}} \\
\midrule
Original & 0.04 & 0.01 & 0.05 & 0.62 & 0.33 & --- \\
Retain   & 0.61 & 0.85 & 0.93 & 0.62 & 0.71 & --- \\
\midrule
GradDiff & \redc 1.00 & \redc 1.00 & \orangec 0.96 & 0.56 & \orangec 0.77 & 2.4 \\
IDKDPO   & \orangec 0.98 & 0.40 & 0.85 & 0.57 & 0.66 & 3.3 \\
RKLD     & 0.69 & \yellowc 0.96 & 0.92 & 0.56 & 0.71 & 2.9 \\
NPO      & 0.73 & 0.94 & 0.90 & 0.57 & 0.71 & 2.9 \\
SimNPO   & 0.74 & \orangec 0.97 & 0.92 & \yellowc 0.58 & \yellowc 0.73 & 2.6 \\
ROME     & 0.82 & 0.69 & 0.90 & 0.57 & 0.68 & \redc 0.6 \\
MEMIT    & 0.73 & 0.69 & 0.89 & 0.53 & 0.65 & \orangec 0.7 \\
AlphaEdit   & 0.94 & 0.23 & 0.08 & \orangec 0.61 & 0.52 & \redc 0.6 \\
ZeroUnlearn & 0.80 & 0.80 & \yellowc 0.93 & 0.51 & 0.68 & \redc 0.6 \\
\hline
\rowcolor[HTML]{F2F7FF}
\ours{}  & \yellowc 0.95 & \redc 1.00 & \redc 0.97 & \redc 0.62 & \redc 0.79 & \yellowc 1.8 \\
\midrule
\multicolumn{7}{c}{\textbf{TOFU-10\% (LLaMA3.2-1B-Instruct)}} \\
\midrule
Original & 0.18 & 0.12 & 0.29 & 0.60 & 0.40 & --- \\
Retain   & 0.62 & 0.88 & 0.94 & 0.59 & 0.70 & --- \\
\midrule
RMU      & 0.50 & 0.39 & 0.72 & \yellowc 0.57 & 0.55 & \yellowc 0.6 \\
AltPO    & 0.66 & \yellowc 0.93 & \orangec 0.95 & \yellowc 0.57 & \yellowc 0.71 & 6.9 \\
GradDiff & 0.42 & 0.35 & 0.67 & \orangec 0.59 & 0.53 & 0.8 \\
IDKDPO   & 0.87 & \redc 1.00 & \redc 0.96 & 0.52 & \orangec 0.73 & 6.9 \\
IDKNLL   & \redc 0.98 & 0.45 & 0.74 & 0.55 & 0.64 & 0.8 \\
UNDIAL   & 0.69 & 0.82 & \redc 0.96 & 0.51 & 0.67 & 0.9 \\
NPO      & 0.61 & 0.71 & 0.91 & \yellowc 0.57 & 0.66 & 3.7 \\
SimNPO   & 0.65 & \orangec 0.94 & \yellowc 0.94 & 0.56 & 0.70 & 2.5 \\
ROME     & \redc 0.98 & 0.37 & 0.60 & 0.54 & 0.60 & \redc 0.4 \\
MEMIT    & \yellowc 0.88 & 0.38 & 0.61 & 0.49 & 0.56 & \orangec 0.5 \\
AlphaEdit   & \redc 0.98 & 0.56 & 0.81 & 0.50 & 0.64 & \orangec 0.5 \\
ZeroUnlearn & \orangec 0.92 & 0.47 & 0.81 & 0.44 & 0.59 & \redc 0.4 \\
\hline
\rowcolor[HTML]{F2F7FF}
\ours{}  & 0.78 & \redc 1.00 & \yellowc 0.94 & \redc 0.60 & \redc 0.75 & \orangec 0.5 \\
\bottomrule
\end{tabular}
\end{adjustbox}
\caption{Results on TOFU-5\% and TOFU-10\%. Colors indicate rank among unlearning methods
(red: best, orange: second, yellow: third).}
\label{tab:tofu}
\end{table}

\paragraph{TOFU.}
TOFU~\citep{maini2024tofu} evaluates selective forgetting in question answering about
fictitious authors, where TOFU-$x\%$ designates $x\%$ of the authors for removal while the
model must preserve its behaviour on the remaining ones. Side effects on unrelated
knowledge are measured through the Real Authors and World Facts subsets. We report
forgetting with ROUGE-L~\citep{lin2004rouge}, answer probability, and extraction
strength~\citep{carlini2021extracting, carlini2022quantifying}, and utility with the TOFU model-utility (MU)
score, following~\citep{simnpo}.

On TOFU-5\% (Table~\ref{tab:tofu}) \ours{} attains near-saturated forgetting while
keeping utility close to the original model, giving the best Final Score and the highest
MU. The harder TOFU-10\% setting (same table) splits the baselines into two
failure modes, over-forgetting at the cost of utility (IDKDPO, UNDIAL) or preserving
utility while under-forgetting (RMU). \ours{} avoids both with a single closed-form
update. Metric definitions and edit details are given in Appendix~\ref{app:tofu}.

\paragraph{ZsRE.}
The comparison above places the knowledge editors on an unlearning benchmark, so we also
run the reverse test, evaluating \ours{} inside the harness of~\citep{lin2026zerounlearn}
on ZsRE~\citep{levy2017zeroshot}, a few-shot fact-unlearning benchmark on which those
editors are tuned. Table~\ref{tab:zsre} shows that \ours{} outperforms these tuned
editors: it removes targeted answers more effectively, generalizes better to paraphrases,
and preserves locality at least as well. Metric definitions and our reproduction of their
unedited-model row are given in Appendix~\ref{app:zsre}.

\begin{table}[t]
\centering
\scriptsize
\setlength{\tabcolsep}{4pt}
\renewcommand{\arraystretch}{0.82}
\begin{adjustbox}{max width=\columnwidth}
\begin{tabular}{@{}l cc !{\vrule} c@{}}
\toprule
& \multicolumn{2}{c}{Unlearning Efficacy} & Utility Preservation \\
\cmidrule(lr){2-3}\cmidrule(l){4-4}
Method
& \makecell[c]{Efficacy\\($\downarrow$)}
& \makecell[c]{Generalization\\($\downarrow$)}
& \makecell[c]{Specificity\\($\uparrow$)} \\
\midrule
\multicolumn{4}{c}{\textbf{LLaMA3.2-3B-Instruct}} \\
\midrule
Original    & 32.82\std{4.09} & 32.23\std{4.16} & 28.12\std{2.65} \\
\midrule
ROME        & 32.80\std{4.20} & 32.17\std{4.09} & \redc 28.05\std{2.66} \\
MEMIT       & 32.32\std{4.04} & 31.17\std{4.61} & \yellowc 28.01\std{2.60} \\
AlphaEdit   & \yellowc 29.59\std{3.95} & \yellowc 29.90\std{4.67} & 27.80\std{2.77} \\
ZeroUnlearn & \orangec 27.85\std{3.87} & \orangec 27.52\std{3.87} & 27.73\std{2.70} \\
\hline
\rowcolor[HTML]{F2F7FF}
\ours{}     & \redc 3.52\std{1.39} & \redc 4.15\std{1.32} & \orangec 28.04\std{2.73} \\
\midrule
\multicolumn{4}{c}{\textbf{LLaMA3.1-8B-Instruct}} \\
\midrule
Original    & 40.42\std{4.92} & 36.84\std{4.24} & 29.87\std{2.30} \\
\midrule
ROME        & 40.46\std{4.85} & 36.84\std{4.16} & \yellowc 29.99\std{2.37} \\
MEMIT       & 35.15\std{3.99} & 34.60\std{3.15} & \orangec 30.05\std{2.46} \\
AlphaEdit   & \yellowc 34.12\std{4.16} & \yellowc 34.19\std{3.33} & 29.93\std{2.49} \\
ZeroUnlearn & \orangec 32.67\std{3.43} & \orangec 32.39\std{3.34} & 29.67\std{2.36} \\
\hline
\rowcolor[HTML]{F2F7FF}
\ours{}     & \redc 4.06\std{2.65} & \redc 3.83\std{2.35} & \redc 31.25\std{3.06} \\
\bottomrule
\end{tabular}
\end{adjustbox}
\caption{ZsRE, evaluated in the harness of~\citep{lin2026zerounlearn}, fifty unlearned
facts, mean$\pm$std. }
\label{tab:zsre}
\end{table}

\paragraph{WMDP.}
WMDP~\citep{wmdp} targets the suppression of hazardous knowledge rather than the removal
of memorized text. Following~\citep{simnpo}, we report $1-\mathrm{AccBio}$ on WMDP-Bio as
the forgetting metric, where larger values indicate stronger suppression of the hazardous
slice, and MMLU accuracy~\citep{hendrycks2020measuring} as the utility metric. Several
baselines in Table~\ref{tab:wmdp} reach strong suppression only at the cost of general
ability, the clearest case being GradDiff, which collapses MMLU. \ours{} instead retains
by far the highest general-knowledge utility at competitive suppression, giving the best
Final Score, and it does so with a single closed-form edit at one MLP layer. Details are
given in Appendix~\ref{app:wmdp}.

\begin{table}[t]
\centering
\scriptsize
\setlength{\tabcolsep}{4pt}
\renewcommand{\arraystretch}{0.82}

\begin{adjustbox}{max width=\columnwidth}
\begin{tabular}{@{}lcc!{\vrule}cc@{}}
\toprule
& Unlearning Efficacy & Utility Preservation & \multicolumn{2}{c}{Summary} \\
\cmidrule(lr){2-2}\cmidrule(lr){3-3}\cmidrule(l){4-5}
Method
& \makecell[c]{$1-\mathrm{AccBio}$\\($\uparrow$)}
& \makecell[c]{MMLU\\($\uparrow$)}
& \makecell[c]{Final Score\\($\uparrow$)}
& \makecell[c]{Time (m)\\($\downarrow$)} \\
\midrule

Original & 0.27 & 0.65 & 0.46 & --- \\
UnDIAL   & 0.65 & 0.45 & 0.55 & 20.9 \\
GradDiff & \orangec 0.73 & 0.26 & 0.50 & 20.2 \\
IDKNLL   & 0.66 & \yellowc 0.47 & 0.57 & 20.2 \\
IDKDPO   & 0.67 & 0.44 & 0.56 & 23.9 \\
NPO      & \orangec 0.73 & \orangec 0.51 & \orangec 0.62 & 23.9 \\
SimNPO   & \redc 0.75 & 0.44 & \yellowc 0.60 & 21.6 \\
\midrule
\ours{}  & \yellowc 0.71 & \redc 0.55 & \redc 0.63 & \redc \textbf{0.2} \\

\bottomrule
\end{tabular}
\end{adjustbox}

\caption{Results on WMDP-Bio (Llama-3-8B-Instruct). Forgetting is measured by
$1-\mathrm{AccBio}$ on WMDP-Bio, and utility is measured by MMLU accuracy. Final Score is
$\tfrac12\big((1-\mathrm{AccBio})+\mathrm{MMLU}\big)$.}
\label{tab:wmdp}
\end{table}

\paragraph{MUSE.}
MUSE~\citep{shi2024muse} addresses long-form unlearning, where a model may leak verbatim
passages as well as the underlying facts. MUSE News splits BBC articles into disjoint
forget, retain, and holdout sets. MUSE Books is the harder entangled scenario, using the
Harry Potter novels \citep{eldan2023s, wei2024evaluating} as the forget set and a related fan wiki as the retain set, so the
model must stop reproducing copyrighted text while still answering questions about closely
related permissible material. We report VerbMem (verbatim regurgitation), KnowMem
(forget-corpus knowledge), and PrivLeak (membership leakage against the holdout set), with
retain KnowMem as the utility measure, all defined in Appendix~\ref{app:muse}. In
Table~\ref{tab:muse-unlearning}, \ours{} attains the best Final Score on both corpora,
driving privacy leakage close to zero on News and, on Books, preserving the highest retain
utility while nearly eliminating verbatim memorization, where several baselines erase
forget-set memorization only by destroying retain utility.

\begin{table}[t]
\centering
\scriptsize
\setlength{\tabcolsep}{4pt}
\renewcommand{\arraystretch}{0.82}

\begin{adjustbox}{max width=\columnwidth}
\begin{tabular}{lccc!{\vrule}ccc}
\toprule
& \multicolumn{3}{c}{Unlearning Efficacy} & Utility & \multicolumn{2}{c}{Summary} \\
\cmidrule(r){2-4}\cmidrule(lr){5-5}\cmidrule(l){6-7}
Method
& \makecell[c]{VerbMem\\$D_f$ ($\downarrow$)}
& \makecell[c]{KnowMem\\$D_f$ ($\downarrow$)}
& \makecell[c]{PrivLeak\\($\to 0$)}
& \makecell[c]{KnowMem\\$D_r$ ($\uparrow$)}
& \makecell[c]{Final\\Score ($\uparrow$)}
& \makecell[c]{Time (m)\\($\downarrow$)} \\
\midrule
\multicolumn{7}{c}{\textbf{MUSE News}} \\ \midrule

Original          &  58.29 &  62.93 & -98.71 & 54.31 & 40.50 & --- \\
Retain            &  20.75 &  33.32 &   0.00 & 53.79 & 67.88 & --- \\

\midrule

GA                & \redc 0.00 & \redc 0.00 & \orangec 20.14 & 0.00 & \yellowc 46.64 & 18.0 \\
GradDiff          & 4.85 & \yellowc 31.29 & 108.12 & 28.21 & 40.06 & 35.4 \\
Task Vector       & 77.42 & 58.76 & -100.00 & \redc 47.94 & 34.61 & 18.0 \\
NPO               & \yellowc 2.53 & 56.93 & 108.91 & \yellowc 37.58 & 40.73 & 42.4 \\
SimNPO            & \orangec 2.34 & 44.84 & \yellowc 72.93 & \orangec 39.65 & \orangec 49.81 & 36.3 \\
\ours{}           & 15.68 & \orangec 24.01 & \redc -3.80 & 26.37 & \redc 55.93 & \redc \textbf{0.3} \\

\midrule
\multicolumn{7}{c}{\textbf{MUSE Books}} \\
\midrule

Original          & 99.56 & 58.32 & -56.32 & 67.01 & 47.80 & --- \\
Retain            & 14.30 & 28.90 &   0.00 & 74.50 & 80.05 & --- \\

\midrule

GA                & \redc 0.00 & \redc 0.00 & \yellowc -24.07 & 0.00 & 45.99 & 26.7 \\
GradDiff          & \redc 0.00 & \redc 0.00 & -24.59 & 0.13 & 45.97 & 52.5 \\
Task Vector       & 99.31 & \orangec 35.55 & -83.78 & \orangec 62.55 & 44.84 & 26.7 \\
NPO               & \redc 0.00 & \redc 0.00 & -31.17 & 23.71 & \yellowc 56.66 & 62.5 \\
SimNPO            & \redc 0.00 & \redc 0.00 & \orangec -19.82 & \yellowc 48.27 & \orangec 70.83 & 53.8 \\
\ours{}           & \orangec 2.40 & \yellowc 36.89 & \redc -0.22 & \redc 65.56 & \redc 76.20 & \redc \textbf{0.3} \\

\bottomrule
\end{tabular}
\end{adjustbox}

\caption{Performance on MUSE News (LLaMA2-7B) and MUSE Books (ICLM-7B). Final Score is computed as $\frac{1}{2}(\mathrm{KnowMem}\,D_r + \mathrm{ForgetAvg})$, where $\mathrm{ForgetAvg}=\frac{(100-\mathrm{VerbMem}\,D_f)+(100-\mathrm{KnowMem}\,D_f)+(100-|\mathrm{PrivLeak}|)}{3}$. \ours{} attains the best Final Score among unlearning methods on both MUSE variants while being two orders of magnitude faster to apply. On MUSE Books it also reaches the highest retain utility (KnowMem $D_r$) while nearly eliminating verbatim memorization and driving privacy leakage to $\approx 0$.}
\label{tab:muse-unlearning}
\end{table}

\paragraph{\ours{} is quantization-robust.}
Unlearning can be undone simply by quantizing the model, which restores the supposedly
forgotten content~\citep{zhang2025quantfail}. We probe this on MUSE at full precision and
at 4-bit in Table~\ref{tab:quant}. A rise in forget memorization (VerbMem $D_f$, KnowMem $D_f$) means the unlearning
only hid the content. SimNPO fails sharply, its verbatim memorization climbing from near
zero back toward the original model on both corpora, with privacy leakage swinging back as
well. \ours{} does not recover, matching the gold Retrain model, which indicates that the
closed-form edit removes the targeted content rather than hiding it in low-magnitude
weights that rounding undoes. Retain utility (KnowMem $D_r$) drops at 4-bit for every
method including Original and Retrain.

\begin{table}[t]
\centering
\scriptsize
\setlength{\tabcolsep}{4pt}
\renewcommand{\arraystretch}{0.82}
\begin{adjustbox}{max width=\columnwidth}
\begin{tabular}{@{}l l ccc!{\vrule}cc@{}}
\toprule
& & \multicolumn{3}{c}{Unlearning Efficacy} & Utility & Summary\\
\cmidrule(r){3-5}\cmidrule(lr){6-6}\cmidrule(l){7-7}
Method & Prec. & \makecell[c]{VerbMem\\$D_f\,(\downarrow)$} & \makecell[c]{KnowMem\\$D_f\,(\downarrow)$}
& \makecell[c]{PrivLeak\\$(\to 0)$} & \makecell[c]{KnowMem\\$D_r\,(\uparrow)$} & \makecell[c]{Final\\Score ($\uparrow$)}\\
\midrule
\multicolumn{7}{c}{\textbf{MUSE News}}\\
\midrule
\multirow{2}{*}{Original} & full  & 58.29 & 62.93 & -98.71 & 54.31 & 40.50\\
                          & 4-bit & 45.8  & 55.6  & -99.8  & 48.5  & 40.7\\
\hdashline
\multirow{2}{*}{Retrain}  & full  & 20.75 & 33.32 &   0.00 & 53.79 & 67.88\\
                          & 4-bit & 19.7  & 36.5  &  -2.1  & 47.7  & 64.1\\
\hdashline
\multirow{2}{*}{SimNPO}   & full  &  2.34 & 44.84 &  72.93 & 39.65 & 49.81\\
                          & 4-bit & 38.7 & 48.2 & -99.8 & 47.4 & 42.6\\
\hdashline
\multirow{2}{*}{\ours{}}  & full  & 15.68 & 24.01 &  -3.80 & 26.37 & 55.93\\
                          & 4-bit & 13.5  & 22.8  &   0.5  & 20.9  & 54.3\\
\midrule
\multicolumn{7}{c}{\textbf{MUSE Books}}\\
\midrule
\multirow{2}{*}{Original} & full  & 99.56 & 58.32 & -56.32 & 67.01 & 47.80\\
                          & 4-bit & 94.5  & 36.2  & -60.4  & 50.6  & 43.5\\
\hdashline
\multirow{2}{*}{Retrain}  & full  & 14.30 & 28.90 &   0.00 & 74.50 & 80.05\\
                          & 4-bit & 14.1  & 24.5  &  -3.6  & 62.1  & 74.0\\
\hdashline
\multirow{2}{*}{SimNPO}   & full  &  0.00 &  0.00 & -19.82 & 48.27 & 70.83\\
                          & 4-bit & 72.5 & 34.0 & -59.4 & 52.4 & 48.6\\
\hdashline
\multirow{2}{*}{\ours{}}  & full  &  2.40 & 36.89 &  -0.22 & 65.56 & 76.20\\
                          & 4-bit &  2.9  & 29.4  &   2.1  & 45.1  & 66.8\\
\bottomrule
\end{tabular}
\end{adjustbox}
\caption{\textbf{Robustness to quantization on MUSE}, reproducing~\citep{zhang2025quantfail}.
A rise in forget memorization at 4-bit means the content was
only hidden: SimNPO recovers much of it, whereas \ours{} stays as low as the
gold Retrain model.}
\label{tab:quant}
\end{table}

\paragraph{Exact deletion auditability.}
We now use Theorem~\ref{thm:influence} as an evaluation tool, measuring how much each
forget example contributed to the edit. Writing $\gamma=\max_i\|\Delta P_i\|_F$ for the
largest single-example influence, the ratio $\gamma/\|P^\star\|_F$ is small on TOFU-10\%,
so no single example dominates the update, and it matches a brute-force re-solve.
Figure~\ref{fig:influence} shows a tight rather than heavy-tailed distribution, and
Appendix~\ref{app:proofs} confirms the $O(1/s)$ decay as the forget set grows. 

\begin{figure}[t]
\centering
\includegraphics[width=\columnwidth]{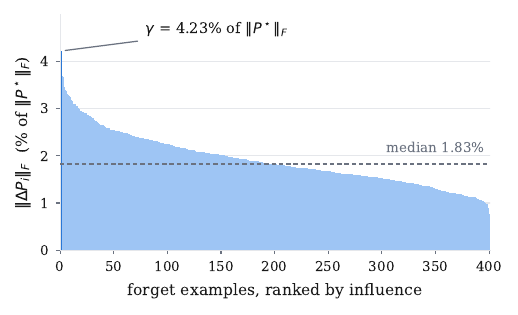}
\caption{\textbf{Per-example deletion influence on TOFU-10\%} (layer $15$, all $400$
forget examples, computed exactly via Theorem~\ref{thm:influence}). Each bar is the
influence $\|\Delta P_i\|_F$ of one forget example on the edit, as a percentage of
$\|P^\star\|_F$, sorted in decreasing order. The most influential example (dark)
defines $\gamma$. }
\label{fig:influence}
\end{figure}

\paragraph{Ablation: specificity weighting.}
We ask whether the specificity weight $\alpha$, rather than token suppression alone, is what balances forgetting against utility. We therefore rerun the best configuration of each suppression benchmark with $\alpha_j$ fixed to $1$ and compare it against the original weighting (Table~\ref{tab:ablations}). Disabling $\alpha$ lowers every score, and the drop is largest on the benchmarks where the forget and retain sets share the most vocabulary (MUSE News and TOFU-5\%), because uniform suppression also removes tokens that are needed on the retain set. The corollary is that the specificity weight, and not suppression alone, is the component that preserves utility.

\begin{table}[t]
\centering
\scriptsize
\setlength{\tabcolsep}{4pt}
\renewcommand{\arraystretch}{0.82}
\begin{adjustbox}{max width=\columnwidth}
\begin{tabular}{@{}l c cccc !{\vrule} c@{}}
\toprule
& Specificity & \multicolumn{4}{c}{Edited matrix} & \\
\cmidrule(lr){2-2}\cmidrule(lr){3-6}\cmidrule(l){7-7}
Benchmark & $\alpha$ off &
\texttt{o\_proj} & \texttt{up\_proj} & \texttt{q\_proj} & \texttt{v\_proj} &
\cellcolor[HTML]{EAF1FB}\ours{} \\
\midrule
TOFU-5\%   & 0.727 & \textbf{0.793} & 0.75 & 0.38 & 0.79 & \cellcolor[HTML]{EAF1FB}0.790 \\
TOFU-10\%  & 0.727 & 0.706 & 0.41 & 0.40 & 0.52 & \cellcolor[HTML]{EAF1FB}\textbf{0.747} \\
WMDP       & ---   & 0.62  & 0.46 & 0.46 & 0.48 & \cellcolor[HTML]{EAF1FB}\textbf{0.63} \\
MUSE-News  & 42.1  & 52.0  & 52.2 & 40.9 & 44.3 & \cellcolor[HTML]{EAF1FB}\textbf{55.5} \\
MUSE-Books & 72.4  & 68.48 & 62.9 & 52.4 & 69.5 & \cellcolor[HTML]{EAF1FB}\textbf{76.2} \\
\bottomrule
\end{tabular}
\end{adjustbox}
\caption{Two ablation studies of \ours{}, the \colorbox[HTML]{EAF1FB}{shaded} reference
column at right, which is the $\alpha$-on MLP \texttt{down\_proj} edit. \emph{Specificity:} disabling $\alpha$ (uniform suppression) lowers every
score. \emph{Edited matrix:}
each alternative matrix receives the strongest target it admits, and only \texttt{o\_proj}
on TOFU-5\% edges out \ours{}. }
\label{tab:ablations}
\end{table}

\paragraph{Ablation: PCA analysis.}
To visualize the action of the closed-form patch, we collect mean answer-position LM-head inputs $x$, compare $Wx$ with $(W+P)x$, and project each split separately onto three principal components. Figure~\ref{fig:pca-ablation} shows that the forget logits move substantially after the patch, while the retain logits remain nearly fixed.

\begin{figure}[t]
\centering
\includegraphics[width=\columnwidth]{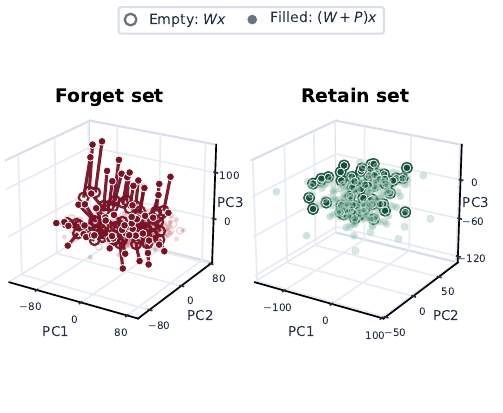}
\caption{3D PCA diagnostic of the LM-head patch. Hollow markers denote $Wx$, and filled markers denote $(W+P)x$. Forget points move visibly, whereas retain points nearly overlap. Only meaningful movements are shown in bold.}
\label{fig:pca-ablation}
\end{figure}

\paragraph{Ablation: which matrices are editable.}
Finally we apply the closed-form edit to each candidate matrix in turn, giving every matrix
the strongest target it admits (Table~\ref{tab:ablations}). Only the linear residual
writers realize the edit, as the Method discussion of editable matrices predicts, most clearly on WMDP where
every column shares a single target, and of the two we keep the MLP writer, which is better
or tied throughout and alone moves membership leakage on the long-form corpus. A further
ablation on \emph{which} layers to edit, comparing the attribution-selected band against
early, random, narrower, and wider alternatives, is reported in Appendix~\ref{app:additional-experiments}. We also provide hyperparameter sensitivity analysis in Appendix~\ref{app:hparam-sensitivity}.

\section{Conclusions}
We presented \ours{}, to our knowledge the first one-shot unlearning method that achieves and surpasses the unlearning performance of iterative, optimization based methods.
We argue that our method might pave the way to efficiently unlearning even larger models, where training based approaches are infeasible on limited hardware.
Our method is somewhat related in spirit to locate-and-edit methods for knowledge editing. 
It is an interesting avenue for future research to extend our method to the knowledge editing task.


\bibliography{aaai2027}

\appendix

\setcounter{secnumdepth}{2}
\numberwithin{table}{section}
\numberwithin{figure}{section}

\section{Additional Proofs}
\label{app:proofs}

\subsection{Proof of the Closed-Form Update}
\label{app:closed-form-proof}

We prove the closed-form update stated in Theorem~\ref{thm:closed-form}. First, we
record the invertibility of the matrix appearing in the solution.

\begin{apxlemma}
Let $X_r\in\mathbb{R}^{n\times r}$, $X_f\in\mathbb{R}^{n\times s}$, and let
$w_r,w_f,\mu>0$. Define
\[
\begin{aligned}
A
&=
\frac{w_r}{r}X_rX_r^\top
+
\frac{w_f}{s}X_fX_f^\top
+
\mu I_n .
\end{aligned}
\]
Then $A$ is symmetric positive definite. In particular, $A$ is invertible.
\end{apxlemma}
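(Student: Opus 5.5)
The plan is to verify the two defining properties of a symmetric positive definite matrix directly, and then read off invertibility from positive definiteness.

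\emph{Symmetry.} I would take transposes term by term. We have $(X_rX_r^\top)^\top = X_rX_r^\top$ and $(X_fX_f^\top)^\top = X_fX_f^\top$, and $I_n$ is symmetric. The coefficients $w_r/r$, $w_f/s$, and $\mu$ are real scalars, so $A^\top = A$.

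\emph{Positive definiteness.} For an arbitrary nonzero $v\in\mathbb{R}^n$, I would expand the quadratic form as
\[
v^\top A v = \frac{w_r}{r}\|X_r^\top v\|_2^2 + \frac{w_f}{s}\|X_f^\top v\|_2^2 + \mu\|v\|_2^2 .
\]
This uses the identity $v^\top X X^\top v = (X^\top v)^\top (X^\top v)$. The first two terms are nonnegative because $w_r, w_f > 0$ and $r, s \geq 1$. The last term is strictly positive because $\mu > 0$ and $v \neq 0$. Hence $v^\top A v \geq \mu\|v\|_2^2 > 0$.

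\emph{Invertibility.} I would argue that $A$ has trivial kernel. If $Av = 0$ with $v \neq 0$, then $v^\top A v = 0$, which contradicts the bound above. So $A$ is invertible. Equivalently, every eigenvalue of the symmetric matrix $A$ is at least $\mu > 0$.

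No step here is a genuine obstacle. The only point needing care is that positivity comes from the ridge term $\mu I_n$ alone, since the Gram terms $X_rX_r^\top$ and $X_fX_f^\top$ may be rank-deficient, for example when $r, s < n$. The proof should therefore rely only on $\mu > 0$, not on any rank condition on $X_r$ or $X_f$. This same lower bound $\lambda_{\min}(A) \geq \mu$ is what will later make the stationarity equation for Theorem~\ref{thm:closed-form} uniquely solvable.
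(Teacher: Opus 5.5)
Your proposal is correct and follows the paper's proof essentially step for step: symmetry by transposing each term, positive definiteness by expanding $v^\top A v$ into two nonnegative Gram terms plus $\mu\|v\|_2^2>0$, and invertibility from positive definiteness. Your explicit trivial-kernel argument and the bound $\lambda_{\min}(A)\ge\mu$ just spell out the final step, which the paper states in one line.
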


\begin{proof}
First, $A$ is symmetric because
\[
\begin{aligned}
(X_rX_r^\top)^\top &= X_rX_r^\top,\\
(X_fX_f^\top)^\top &= X_fX_f^\top,\\
I_n^\top &= I_n.
\end{aligned}
\]
Hence $A^\top=A$.

Now let $v\in\mathbb{R}^n$ be nonzero. Then
\[
\begin{aligned}
v^\top A v
&=
\frac{w_r}{r}\|X_r^\top v\|_2^2
+
\frac{w_f}{s}\|X_f^\top v\|_2^2
+
\mu\|v\|_2^2 .
\end{aligned}
\]
The first two terms are nonnegative, and the last term is strictly positive
since $\mu>0$ and $v\neq 0$. Therefore $v^\top A v>0$ for every nonzero $v$,
so $A$ is positive definite. Hence $A$ is invertible.
\end{proof}

\begin{apxtheorem}
Let $X_r\in\mathbb{R}^{n\times r}$, $X_f\in\mathbb{R}^{n\times s}$,
$D\in\mathbb{R}^{m\times s}$, and let $w_r,w_f,\mu>0$. Define
\[
\begin{aligned}
A
&=
\frac{w_r}{r}X_rX_r^\top
+
\frac{w_f}{s}X_fX_f^\top
+
\mu I_n
\in\mathbb{R}^{n\times n}.
\end{aligned}
\]
Then:
\begin{enumerate}
    \item The ridge-regularized objective
    \[
    \begin{aligned}
      \min_{P\in\mathbb{R}^{m\times n}}\;
      &
      \frac{w_r}{r}\,\|PX_r\|_F^2
      +
      \frac{w_f}{s}\,\|PX_f-D\|_F^2 \\
      &+
      \mu\,\|P\|_F^2
    \end{aligned}
    \]
    is minimized by
    \[
    P^\star
    =
    \frac{w_f}{s}\,
    D X_f^\top A^{-1}.
    \]
    Equivalently,
    \[
    \begin{aligned}
    P^\star
    &=
    \frac{w_f}{s}\,
    D X_f^\top
    \Big(
    \frac{w_r}{r}X_rX_r^\top \\
    &\qquad\qquad
    +
    \frac{w_f}{s}X_fX_f^\top
    +
    \mu I_n
    \Big)^{-1}.
    \end{aligned}
    \]

    \item The minimizer $P^\star$ is unique.
\end{enumerate}
\end{apxtheorem}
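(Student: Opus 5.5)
The plan is to treat the objective as a strictly convex quadratic in $P$ and read off its unique minimizer from the first-order condition, using the preceding lemma for invertibility of $A$. Write
\[
J(P)=\frac{w_r}{r}\operatorname{tr}(PX_rX_r^\top P^\top)+\frac{w_f}{s}\operatorname{tr}\big((PX_f-D)(PX_f-D)^\top\big)+\mu\operatorname{tr}(PP^\top).
\]
Expanding the middle term and collecting everything that is quadratic in $P$ gives
\[
J(P)=\operatorname{tr}(PAP^\top)-\frac{2w_f}{s}\operatorname{tr}(PX_fD^\top)+\frac{w_f}{s}\|D\|_F^2 .
\]

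\textbf{Stationary point.} Differentiating with respect to $P$ (using symmetry of $A$) gives
\[
\nabla J(P)=2PA-\frac{2w_f}{s}DX_f^\top .
\]
Setting this to zero and right-multiplying by $A^{-1}$, which exists by the lemma, yields the candidate
\[
P^\star=\frac{w_f}{s}DX_f^\top A^{-1}.
\]

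\textbf{Global minimality and uniqueness.} Rather than rely on a generic convexity appeal, I would complete the square directly. For any $P$, set $E=P-P^\star$. Because $P^\star A=\frac{w_f}{s}DX_f^\top$, the cross terms cancel and
\[
J(P)=J(P^\star)+\operatorname{tr}(EAE^\top).
\]
The remainder is $\sum_i e_i^\top A e_i$, where $e_i^\top$ are the rows of $E$. Positive definiteness of $A$ (from the lemma) makes each summand nonnegative. The remainder vanishes only when every row $e_i$ is zero, that is, when $P=P^\star$. This single identity establishes both claims at once: $P^\star$ is a global minimizer, and it is the only one.

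\textbf{Expected obstacle.} The only delicate step is the bookkeeping in the completed-square identity. One must check that the cross term $2\operatorname{tr}(EAP^{\star\top})-\frac{2w_f}{s}\operatorname{tr}(EX_fD^\top)$ indeed vanishes. Substituting $AP^{\star\top}=\frac{w_f}{s}X_fD^\top$, which again uses $A^\top=A$, makes the two terms equal, so their difference is zero.
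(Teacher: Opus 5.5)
Your proposal is correct and follows the paper's proof essentially step for step: the same trace expansion into $\operatorname{tr}(PAP^\top)-\frac{2w_f}{s}\operatorname{tr}(PX_fD^\top)$ plus a constant, the same gradient $2PA-\frac{2w_f}{s}DX_f^\top$ set to zero, and the same row-wise positive-definiteness argument $\operatorname{tr}(EAE^\top)=\sum_i e_i^\top A e_i$. The only difference is that you write out the exact identity $J(P)=J(P^\star)+\operatorname{tr}(EAE^\top)$ and check that the cross terms cancel, where the paper appeals to strict convexity; this makes the global-minimality and uniqueness step self-contained.
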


\begin{proof}
Let
\[
\begin{aligned}
\mathcal{L}(P)
&=
\frac{w_r}{r}\|PX_r\|_F^2
+
\frac{w_f}{s}\|PX_f-D\|_F^2
+
\mu\|P\|_F^2 .
\end{aligned}
\]
We rewrite each term using
\[
\|M\|_F^2=\operatorname{tr}(MM^\top).
\]
For the retain term,
\[
\begin{aligned}
\|PX_r\|_F^2
&=
\operatorname{tr}\big((PX_r)(PX_r)^\top\big)\\
&=
\operatorname{tr}(PX_rX_r^\top P^\top).
\end{aligned}
\]
For the forget term,
\[
\begin{aligned}
\|PX_f-D\|_F^2
=
\operatorname{tr}\big((PX_f-D)(PX_f-D)^\top\big).
\end{aligned}
\]
Expanding the product gives
\[
\begin{aligned}
&(PX_f-D)(PX_f-D)^\top\\
&\quad =
PX_fX_f^\top P^\top
-
PX_fD^\top
-
DX_f^\top P^\top
+
DD^\top .
\end{aligned}
\]
Therefore
\[
\begin{aligned}
\|PX_f-D\|_F^2
&=
\operatorname{tr}(PX_fX_f^\top P^\top)
-
\operatorname{tr}(PX_fD^\top)\\
&\quad
-
\operatorname{tr}(DX_f^\top P^\top)
+
\operatorname{tr}(DD^\top).
\end{aligned}
\]
Using $\operatorname{tr}(B)=\operatorname{tr}(B^\top)$, we have
\[
\begin{aligned}
\operatorname{tr}(DX_f^\top P^\top)
&=
\operatorname{tr}\big((PX_fD^\top)^\top\big)\\
&=
\operatorname{tr}(PX_fD^\top).
\end{aligned}
\]
Hence
\[
\begin{aligned}
\|PX_f-D\|_F^2
&=
\operatorname{tr}(PX_fX_f^\top P^\top)
-
2\operatorname{tr}(PX_fD^\top)\\
&\quad
+
\operatorname{tr}(DD^\top).
\end{aligned}
\]
Finally,
\[
\begin{aligned}
\|P\|_F^2
&=
\operatorname{tr}(PP^\top)
=
\operatorname{tr}(PI_nP^\top).
\end{aligned}
\]

Substituting these identities into $\mathcal{L}(P)$ gives
\[
\begin{aligned}
\mathcal{L}(P)
&=
\frac{w_r}{r}\operatorname{tr}(PX_rX_r^\top P^\top)
+
\frac{w_f}{s}\operatorname{tr}(PX_fX_f^\top P^\top)\\
&\quad
-
2\frac{w_f}{s}\operatorname{tr}(PX_fD^\top)
+
\frac{w_f}{s}\operatorname{tr}(DD^\top)\\
&\quad
+
\mu\operatorname{tr}(PI_nP^\top).
\end{aligned}
\]
Collecting the quadratic terms in $P$, and recalling that
\[
\begin{aligned}
A
&=
\frac{w_r}{r}X_rX_r^\top
+
\frac{w_f}{s}X_fX_f^\top
+
\mu I_n,
\end{aligned}
\]
we obtain
\[
\begin{aligned}
\mathcal{L}(P)
&= \operatorname{tr}(PAP^\top)
\\
&\quad
- \frac{2w_f}{s}\operatorname{tr}(PX_fD^\top)
\\
&\quad
+ \frac{w_f}{s}\operatorname{tr}(DD^\top).
\end{aligned}
\]
The final term is independent of $P$, so it does not affect the minimizer.

We now differentiate with respect to $P$. Since $A=A^\top$ by the lemma,
\[
\nabla_P \operatorname{tr}(PAP^\top)
=
2PA.
\]
Also,
\[
\operatorname{tr}(PX_fD^\top)
=
\operatorname{tr}(P(X_fD^\top)),
\]
so
\[
\begin{aligned}
\nabla_P \operatorname{tr}(PX_fD^\top)
&=
(X_fD^\top)^\top\\
&=
DX_f^\top.
\end{aligned}
\]
Therefore
\[
\begin{aligned}
\nabla_P\mathcal{L}(P)
&=
2PA
-
2\frac{w_f}{s}DX_f^\top.
\end{aligned}
\]

At any stationary point, the gradient must vanish:
\[
\begin{aligned}
2PA
-
2\frac{w_f}{s}DX_f^\top
=
0.
\end{aligned}
\]
Equivalently,
\[
PA
=
\frac{w_f}{s}DX_f^\top.
\]
By the lemma, $A$ is positive definite and therefore invertible. Multiplying on
the right by $A^{-1}$ gives
\[
P
=
\frac{w_f}{s}DX_f^\top A^{-1}.
\]
Thus
\[
P^\star
=
\frac{w_f}{s}DX_f^\top A^{-1}.
\]

It remains to show that this stationary point is the unique global minimizer.
Let $H\in\mathbb{R}^{m\times n}$ be any nonzero perturbation. Consider
\[
\mathcal{L}(P+H).
\]
Only the quadratic part determines strict convexity, and its second-order change is
\[
\operatorname{tr}(HAH^\top).
\]
Write the rows of $H$ as $h_1^\top,\ldots,h_m^\top$, where
$h_i\in\mathbb{R}^n$. Then
\[
\operatorname{tr}(HAH^\top)
=
\sum_{i=1}^m h_i^\top A h_i.
\]
Since $A$ is positive definite,
\[
h_i^\top A h_i\ge 0
\]
for every $i$, with equality only when $h_i=0$. Because $H\neq 0$, at least one
row $h_i$ is nonzero, and therefore
\[
\operatorname{tr}(HAH^\top)>0.
\]
Thus the quadratic part is strictly positive in every nonzero direction $H$.
Consequently, $\mathcal{L}$ is strictly convex in $P$. Hence the stationary point
$P^\star$ is the unique global minimizer.
\end{proof}

\subsection{Proof of the Deletion-Influence Result}
\label{app:deletion}

We prove the deletion-influence result stated in Theorem~\ref{thm:influence}.
Recall $P^\star=\tfrac{w_f}{s}\,D X_f^\top A^{-1}$ with
$A=\tfrac{w_r}{r}X_rX_r^\top+\tfrac{w_f}{s}X_fX_f^\top+\mu I_n$. Partition the forget
keys and target by example: $X_f=[\,C_1\,\cdots\,C_N\,]$, $D=[\,D_1\,\cdots\,D_N\,]$,
$C_i\in\mathbb{R}^{n\times c_i}$, $D_i\in\mathbb{R}^{m\times c_i}$. Here $c_i$ is the
number of answer tokens of example $i$, so $C_i$ holds the key vectors at those token
positions and $\sum_i c_i=s$. Throughout we abbreviate $\lambda=\tfrac{w_f}{s}$ and hold
$\lambda$ fixed at this value when an example is deleted. That is, we treat the objective
as weighting every forget token by the constant $\lambda$ rather than renormalizing by
the reduced number of forget tokens $s-c_i$, and all statements below are exact under
this convention.

\paragraph{The Sherman--Morrison--Woodbury identity.}
The proof rests on one standard matrix identity, which we state for completeness since it
does the essential work. For an invertible $A\in\mathbb{R}^{n\times n}$ and matrices
$U\in\mathbb{R}^{n\times c}$, $V\in\mathbb{R}^{c\times n}$ and invertible
$S\in\mathbb{R}^{c\times c}$, the Sherman--Morrison--Woodbury identity
\citep{hager1989updating} states
\begin{equation}
\label{eq:smw}
  (A+USV)^{-1}=A^{-1}-A^{-1}U\big(S^{-1}+VA^{-1}U\big)^{-1}VA^{-1},
\end{equation}
whenever the inverses involved exist. Its content is that a rank-$c$ perturbation of $A$
produces a rank-$c$ perturbation of $A^{-1}$, and that computing it costs only a
$c\times c$ inverse rather than a fresh $n\times n$ one. We use it in the
\emph{downdate} direction, $U=C_i$, $V=C_i^\top$ and $S=-\lambda I_{c_i}$, which is
exactly the perturbation caused by deleting one example. Lemma~\ref{lem:downdate} below
records that special case and verifies it directly, so the proof is self-contained.

\begin{apxlemma}[Rank-$c_i$ downdate]
\label{lem:downdate}
Let $A\succeq\mu I_n$ with $\mu>0$ and let $A_{-i}=A-\lambda C_iC_i^\top$ be positive
definite. Put
\[
  R_i=A^{-1}C_i\in\mathbb{R}^{n\times c_i},
\]
\[
  M_i=I_{c_i}-\lambda\,C_i^\top A^{-1}C_i\in\mathbb{R}^{c_i\times c_i}.
\]
Then $M_i$ is invertible and
$A_{-i}^{-1}=A^{-1}+\lambda\,R_i\,M_i^{-1}R_i^\top$.
\end{apxlemma}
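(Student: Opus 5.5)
We prove Lemma~\ref{lem:downdate} directly rather than quoting \eqref{eq:smw}. The argument has two parts: first show that $M_i$ is invertible, then check that the proposed matrix is a right inverse of $A_{-i}$.

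\emph{Invertibility of $M_i$.} The route is to relate $M_i$ to $A_{-i}$ through a determinant identity. Write $A_{-i}=A\big(I_n-\lambda A^{-1}C_iC_i^\top\big)$. Sylvester's determinant identity $\det(I_n-UV)=\det(I_{c_i}-VU)$ with $U=\lambda A^{-1}C_i$ and $V=C_i^\top$ gives
\[
\det A_{-i}=\det A\cdot\det\big(I_{c_i}-\lambda C_i^\top A^{-1}C_i\big)=\det A\cdot\det M_i .
\]
Both $A$ (by the earlier lemma, or since $A\succeq\mu I_n$) and $A_{-i}$ (by hypothesis) are positive definite, so both determinants are positive. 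Hence $\det M_i>0$ and $M_i$ is invertible.

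An alternative that avoids Sylvester: suppose $M_i z=0$ for some $z$. Put $y=A^{-1}C_iz$. Then $A_{-i}y=C_iz-\lambda C_iC_i^\top A^{-1}C_iz=C_iM_iz=0$, and positive definiteness of $A_{-i}$ forces $y=0$. It follows that $C_iz=Ay=0$, and then $M_iz=z-\lambda C_i^\top A^{-1}C_iz=z$, so $z=0$. I would present this version, since it uses only the hypothesis and nothing external.

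\emph{Verification of the inverse formula.} Since $A_{-i}$ is square and invertible, it suffices to check that $A_{-i}\big(A^{-1}+\lambda R_iM_i^{-1}R_i^\top\big)=I_n$. Expanding with $R_i=A^{-1}C_i$ gives
\[
I_n-\lambda C_iC_i^\top A^{-1}+\lambda C_iM_i^{-1}R_i^\top-\lambda^2 C_i\big(C_i^\top A^{-1}C_i\big)M_i^{-1}R_i^\top .
\]
The key substitution is $\lambda C_i^\top A^{-1}C_i=I_{c_i}-M_i$. With it, the last two terms combine to
\[
\lambda C_i\big[I-(I-M_i)\big]M_i^{-1}R_i^\top=\lambda C_iR_i^\top=\lambda C_iC_i^\top A^{-1}.
\]
This cancels the second term exactly, leaving $I_n$. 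Here we used symmetry of $A^{-1}$ to write $R_i^\top=C_i^\top A^{-1}$.

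The only delicate point is the invertibility of $M_i$, since that is where the hypothesis $A_{-i}\succ0$ enters. Once that is settled, the verification is purely algebraic.
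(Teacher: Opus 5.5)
Your proof is correct and takes essentially the paper's approach. The kernel argument with the auxiliary vector $A^{-1}C_iz$ is the paper's: you conclude via $A_{-i}A^{-1}C_i=C_iM_i$ and nonsingularity of $A_{-i}$, where the paper shows $u^\top A_{-i}u=0$ and uses positive definiteness. Your expansion of $A_{-i}\big(A^{-1}+\lambda R_iM_i^{-1}R_i^\top\big)$ using $\lambda C_i^\top A^{-1}C_i=I_{c_i}-M_i$ is the same cancellation the paper performs, and the Sylvester-determinant alternative you mention is also valid, additionally giving $\det M_i=\det A_{-i}/\det A>0$.
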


\begin{proof}
\emph{$M_i$ is invertible.} Suppose $M_iv=0$ for some $v\in\mathbb{R}^{c_i}$, that is
$v=\lambda\,C_i^\top A^{-1}C_iv$, and set $u=A^{-1}C_iv$. Then $C_i^\top u=v/\lambda$ and
$u^\top Au=v^\top C_i^\top A^{-1}C_iv=\|v\|^2/\lambda$, so
\[
  u^\top A_{-i}u=u^\top Au-\lambda\|C_i^\top u\|^2
  =\tfrac{1}{\lambda}\|v\|^2-\tfrac{1}{\lambda}\|v\|^2=0 .
\]
Since $A_{-i}$ is positive definite this forces $u=0$, hence
$v=\lambda C_i^\top u=0$. So $M_i$ has trivial kernel and is invertible.

\emph{The formula.} Write $K_i=C_i^\top A^{-1}C_i$, so that $M_i=I_{c_i}-\lambda K_i$ by
definition. Multiply the claimed inverse by $A_{-i}$ and expand. Using $AR_i=C_i$ and
$C_i^\top A^{-1}=R_i^\top$, the product $(A-\lambda C_iC_i^\top)(A^{-1}+\lambda
R_iM_i^{-1}R_i^\top)$ has four terms,
\begin{multline*}
  I_n+\lambda C_iM_i^{-1}R_i^\top-\lambda C_iR_i^\top\\
  {}-\lambda^2 C_iK_iM_i^{-1}R_i^\top .
\end{multline*}
The last three share the left factor $C_i$ and the right factor $R_i^\top$, so they
collect into
\[
  \lambda C_i\big[(I_{c_i}-\lambda K_i)M_i^{-1}-I_{c_i}\big]R_i^\top ,
\]
and the bracket is $M_iM_i^{-1}-I_{c_i}=0$. The product is therefore $I_n$, which proves
the claim.
\end{proof}

\begin{apxtheorem}[Exact deletion influence]
For each $i$, $A_{-i}=A-\lambda C_iC_i^\top$ is symmetric positive definite, and
the update recomputed without example $i$ at fixed per-token weight $\lambda$,
$P^\star_{-i}=\lambda\,(DX_f^\top-D_iC_i^\top)A_{-i}^{-1}$, satisfies
\[
  \Delta P_i:=P^\star-P^\star_{-i}=L_iR_i^\top,\qquad R_i=A^{-1}C_i,
\]
\[
  L_i=\lambda\big(D_i-B_{-i}\,C_i\,M_i^{-1}\big),\quad
  M_i=I_{c_i}-\lambda\,C_i^\top A^{-1}C_i,
\]
where $B_{-i}=P^\star-\lambda\,D_iC_i^\top A^{-1}$. Hence
$\operatorname{rank}\Delta P_i\le c_i$, and $\Delta P_i$ is obtained from the existing
$A^{-1}$ by one $c_i\times c_i$ inverse, with no $n\times n$ reinversion.
\end{apxtheorem}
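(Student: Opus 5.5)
The plan is to reduce everything to Lemma~\ref{lem:downdate} and then do one short algebraic rearrangement. The first step is to establish that $A_{-i}$ is symmetric positive definite. Since $X_fX_f^\top=\sum_j C_jC_j^\top$, we have
\[
A_{-i}=\tfrac{w_r}{r}X_rX_r^\top+\lambda\sum_{j\neq i}C_jC_j^\top+\mu I_n .
\]
This has exactly the same structure as $A$ in the first lemma of this appendix: a sum of PSD Gram matrices plus $\mu I_n$. The same quadratic-form argument therefore gives $v^\top A_{-i}v\ge\mu\|v\|^2>0$, and symmetry is immediate.

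The second step is to justify the stated formula for $P^\star_{-i}$. Deleting example $i$ at fixed per-token weight $\lambda$ gives the objective with forget keys $X_f$ minus the block $C_i$ and targets $D$ minus the block $D_i$. The closed-form theorem's stationarity argument goes through verbatim with $w_f/s$ replaced by the constant $\lambda$. The Gram matrix becomes $A_{-i}$, and the cross term becomes $\lambda(DX_f^\top-D_iC_i^\top)$, using $DX_f^\top=\sum_j D_jC_j^\top$. Hence $P^\star_{-i}=G\,A_{-i}^{-1}$ with $G:=\lambda(DX_f^\top-D_iC_i^\top)$, and uniqueness again follows from $A_{-i}\succ0$. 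Now that $A_{-i}\succ 0$ is known, Lemma~\ref{lem:downdate} applies. It gives that $M_i$ is invertible and that $A_{-i}^{-1}=A^{-1}+\lambda R_iM_i^{-1}R_i^\top$.

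The core computation, which is where the bookkeeping must be done carefully, is to write both updates around the common matrix $B_{-i}=GA^{-1}=P^\star-\lambda D_iC_i^\top A^{-1}$. Substituting the downdate formula gives
\[
P^\star_{-i}=GA^{-1}+\lambda\,GA^{-1}C_i\,M_i^{-1}R_i^\top .
\]
Since $GA^{-1}C_i=B_{-i}C_i$, this equals $B_{-i}+\lambda B_{-i}C_iM_i^{-1}R_i^\top$. On the other side, $P^\star=B_{-i}+\lambda D_iC_i^\top A^{-1}=B_{-i}+\lambda D_iR_i^\top$, because $R_i^\top=C_i^\top A^{-1}$ by the symmetry of $A$. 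Subtracting, the $B_{-i}$ terms cancel and leave
\[
\Delta P_i=\lambda\big(D_i-B_{-i}C_iM_i^{-1}\big)R_i^\top=L_iR_i^\top .
\]
The main risk is a sign or ordering slip at this stage, namely which side $M_i^{-1}$ sits on and the sign of the downdate. That is why I route it through the already-verified Lemma~\ref{lem:downdate} rather than invoking the general Sherman--Morrison--Woodbury identity \eqref{eq:smw} directly.

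The remaining claims follow from this factorization. Since $R_i^\top$ has $c_i$ rows, $\Delta P_i$ factors through $\mathbb{R}^{c_i}$, so $\operatorname{rank}\Delta P_i\le c_i$. For the cost claim, note that $R_i$ and $C_i^\top A^{-1}C_i=C_i^\top R_i$ use only the cached $A^{-1}$. Likewise, $B_{-i}$ is formed from $P^\star$ and $A^{-1}$. The only fresh inverse needed is $M_i^{-1}$, which is $c_i\times c_i$, so no $n\times n$ reinversion occurs.
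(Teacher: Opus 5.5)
Your proposal is correct and follows essentially the same route as the paper's proof. The shared steps are: show $A_{-i}\succeq\mu I_n$ by the same quadratic-form argument, invoke the rank-$c_i$ downdate lemma to get $A_{-i}^{-1}=A^{-1}+\lambda R_iM_i^{-1}R_i^\top$, substitute into $P^\star_{-i}=GA_{-i}^{-1}$, rewrite using $GA^{-1}=B_{-i}$ and $GR_i=B_{-i}C_i$, and subtract from $P^\star=B_{-i}+\lambda D_iR_i^\top$. Your extra justification of the formula for $P^\star_{-i}$, by rerunning the stationarity argument at fixed $\lambda$, is a harmless addition that the paper simply takes as given.
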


\begin{proof}
We proceed in four steps.

\emph{Step 1: what deleting example $i$ changes.} Because
$DX_f^\top=\sum_{k}D_kC_k^\top$ and $X_fX_f^\top=\sum_k C_kC_k^\top$ split as sums over
examples, removing example $i$ simply drops the $k=i$ term from each. The Gram matrix
becomes $A_{-i}=A-\lambda C_iC_i^\top$ and the numerator becomes
$\lambda(DX_f^\top-D_iC_i^\top)$, which we denote $B$. Both are rank-$c_i$ modifications
of quantities we have already computed.

\emph{Step 2: $A_{-i}$ stays positive definite.} The retain and ridge terms of $A$ are
untouched by the deletion and the remaining forget terms $\sum_{k\neq i}C_kC_k^\top$ are
positive semidefinite, so $A_{-i}\succeq\mu I_n$, exactly as in the Lemma of
Section~\ref{app:closed-form-proof}. In particular $A_{-i}$ is invertible, so
$P^\star_{-i}$ is well defined, and Lemma~\ref{lem:downdate} applies.

\emph{Step 3: invert the downdated Gram matrix.} By Lemma~\ref{lem:downdate},
\begin{equation}
\label{eq:step3}
  A_{-i}^{-1}=A^{-1}+\lambda\,R_iM_i^{-1}R_i^\top .
\end{equation}
This is the only place the deletion enters, and it costs one $c_i\times c_i$ inverse
against the cached $A^{-1}$.

\emph{Step 4: substitute and collect.} Insert \eqref{eq:step3} into
$P^\star_{-i}=BA_{-i}^{-1}$:
\[
  P^\star_{-i}=BA^{-1}+\lambda\,BR_iM_i^{-1}R_i^\top .
\]
We rewrite the two terms. For the first, $BA^{-1}=P^\star-\lambda D_iC_i^\top A^{-1}$ by
the definition of $B$, and the right-hand side is precisely $B_{-i}$, so
$BA^{-1}=B_{-i}$. For the second, $R_i=A^{-1}C_i$ gives
$BR_i=\big(BA^{-1}\big)C_i=B_{-i}C_i$. Hence
\[
  P^\star_{-i}=B_{-i}+\lambda\,B_{-i}C_iM_i^{-1}R_i^\top .
\]
Subtracting from $P^\star$ and using
$P^\star-B_{-i}=\lambda D_iC_i^\top A^{-1}=\lambda D_iR_i^\top$,
\begin{align*}
  \Delta P_i&=\lambda D_iR_i^\top-\lambda B_{-i}C_iM_i^{-1}R_i^\top\\
  &=\lambda\big(D_i-B_{-i}C_iM_i^{-1}\big)R_i^\top=L_iR_i^\top ,
\end{align*}
which is the stated factorization. The two terms have a direct reading: the first removes
example $i$'s own contribution to the target, and the second corrects for the fact that
deleting the example also shrinks the Gram matrix, which redistributes the edit across
the examples that remain. Finally $L_i\in\mathbb{R}^{m\times c_i}$ and
$R_i\in\mathbb{R}^{n\times c_i}$, so $\operatorname{rank}\Delta P_i\le c_i$, and every
quantity above involves $A$ only through the cached $A^{-1}$.
\end{proof}

\begin{remark}[First-order magnitude]
Dropping the $M_i^{-1}$ correction leaves the leading term
$\tfrac{w_f}{s}D_iC_i^\top A^{-1}$, so
$\|\Delta P_i\|_F\le\tfrac{w_f}{s}\|D_i\|_F\|C_i\|_2\,\|A^{-1}\|_2
\le \tfrac{w_f}{s\mu}\|D_i\|_F\|C_i\|_2$, using $A\succeq\mu I_n$. Influence therefore
decreases with forget-set size $s$ and ridge $\mu$.
\end{remark}

\paragraph{Influence under varying forget-set size and target.}
Recomputing the same diagnostic while varying the forget set confirms the $O(1/s)$
scaling of the Remark. On TOFU-10\% at layer $15$, the largest single-example influence
$\gamma$ relative to $\|P^\star\|_F$ falls from $0.173$ with $N=40$ forget examples, to
$0.079$ with $N=120$, and to $0.042$ with $N=400$. Substituting the
representation-corruption target used for WMDP, at the same $N=400$, gives $0.039$, so
the two unlearning targets distribute influence alike. All values are computed exactly
by the Woodbury downdate of Theorem~\ref{thm:influence}.

\paragraph{The bound is close to descriptive.}
On TOFU-10\% we computed $\|\Delta P_i\|_F$ for all $400$ forget examples and compared it
against the factors appearing in the Remark. Under the token-suppression target the
specificity mass of an example is $\|D_i\|_F/\beta=\sqrt{\sum_j\alpha_j^2}$, taken over
its answer positions. This quantity predicts the realized influence with Spearman
$\rho=0.85$ (Pearson $R^2=0.73$), whereas the answer-token count $c_i$ alone reaches only
$\rho=0.52$ ($R^2=0.30$). The original model's ROUGE-L recall and answer probability on
the same example give $|\rho|<0.1$, so influence is not a proxy for how strongly the
answer was memorized. The upper bound of the Remark therefore tracks the realized
influence closely, and the dominant factor is how much forget-specific content an example
contributes rather than its length or its memorization strength. Because $\alpha$ is
computed from token counts alone, this diagnostic requires no additional model
evaluation.

\section{Additional Experiments}
\label{app:additional-experiments}

This section reports additional experiments that do not fit in the main text: an
ablation on the choice of edited layers, and a sensitivity analysis over the
hyperparameters of the objective.

\subsection{Ablation: Choice of Edited Layers}
\label{app:abl-layers}
We next ask whether the attribution-selected layer band is genuinely useful, rather than
only its cardinality or late-layer location. We present ablation results in Table~\ref{tab:abl-layers-app}.
For each benchmark we keep the target, strength, retain weight, ridge
scale, and feature budgets fixed, and change only the edited \texttt{down\_proj} layers.
``Early'' edits the first $k$ layers, with $k$ matched to the selected band. ``Random'' is
a fixed noncontiguous same-cardinality control drawn once from a fixed seed. ``Single''
edits only the last layer of the selected band, ``Narrow'' edits its last two layers, and
``Wide'' extends the selected band toward earlier layers. Early layers destroy utility and
the single and narrow variants under-edit. The random control stays below the selected
band on all four benchmarks, but by an erratic margin, from $0.042$ on TOFU-10\% to an
outright collapse on TOFU-5\%, so same-cardinality alone does not recover the attribution
band. The selected band is therefore the most reliable choice, and it is the best variant
on every benchmark, including against the wider edit that spends more layers to get
there.

\begin{table}[t]
\centering
\scriptsize
\setlength{\tabcolsep}{2.4pt}
\renewcommand{\arraystretch}{1.15}
\resizebox{\columnwidth}{!}{%
\begin{tabular}{lcccccc}
\toprule
Benchmark & Early & Random & Single & Narrow &
\cellcolor[HTML]{EAF1FB}\makecell{Selected\\band} & Wide \\
\cmidrule(r){1-7}
TOFU-10\%  & \makecell{0.492\\$k=5$} & \makecell{0.705\\$k=5$} & \makecell{0.672\\$k=1$} & \makecell{0.716\\$k=2$} & \cellcolor[HTML]{EAF1FB}\makecell{0.747\\$k=5$} & \makecell{0.740\\$k=8$} \\
\cdashline{1-7}
TOFU-5\%   & \makecell{0.494\\$k=6$} & \makecell{0.492\\$k=6$} & \makecell{0.710\\$k=1$} & \makecell{0.723\\$k=2$} & \cellcolor[HTML]{EAF1FB}\makecell{0.790\\$k=6$} & \makecell{0.735\\$k=12$} \\
\cdashline{1-7}
MUSE-News  & \makecell{45.80\\$k=4$} & \makecell{49.11\\$k=4$} & \makecell{42.42\\$k=1$} & \makecell{44.17\\$k=2$} & \cellcolor[HTML]{EAF1FB}\makecell{55.49\\$k=4$} & \makecell{45.53\\$k=8$} \\
\cdashline{1-7}
MUSE-Books & \makecell{46.48\\$k=4$} & \makecell{71.60\\$k=4$} & \makecell{58.92\\$k=1$} & \makecell{64.66\\$k=2$} & \cellcolor[HTML]{EAF1FB}\makecell{76.27\\$k=4$} & \makecell{72.33\\$k=8$} \\
\bottomrule
\end{tabular}}
\caption{Layer-selection ablation with all non-layer hyperparameters fixed. Each cell
reports the score and number of edited MLP \texttt{down\_proj} layers $k$. }
\label{tab:abl-layers-app}
\end{table}

\subsection{Hyperparameter Sensitivity}
\label{app:hparam-sensitivity}

\paragraph{Only three hyperparameters are free.}
The objective carries three weights, $w_f$, $w_r$, and $\mu$, and the suppression target
adds an edit strength $\beta$. One of the four is redundant. Because we set the ridge
relative to the data scale, $\mu=\rho\,\bar{g}$ with $\bar{g}$ the mean diagonal entry of
$A$, rescaling $w_f$ and $w_r$ by a common factor multiplies both $A$ and the numerator
$\tfrac{w_f}{s}DX_f^\top$ of Theorem~\ref{thm:closed-form} by that factor, leaving the
update unchanged:
$$
P^\star(w_f,w_r,\rho)\;=\;P^\star\!\left(1,\;w_r/w_f,\;\rho\right).
$$
Only the ratio $w_r/w_f$ is a hyperparameter, and we fix the gauge $w_f=1$ throughout. We
confirmed this empirically rather than only asserting it: configurations related by the
identity above agree on every reported metric to within the seed-to-seed spread, with
residual differences consistent with rounding when the update is written into bfloat16
weights. The free hyperparameters are therefore the edit strength $\beta$, the retain
weight $w_r$, and the ridge scale $\rho$; the edit width $k$ is treated separately in
Section~\ref{app:abl-layers}.

\paragraph{Findings.}
Table~\ref{tab:hparam-sensitivity} shows that the three hyperparameters play qualitatively different roles.

\emph{Edit strength $\beta$ buys forgetting almost for free up to the tuned value.} At
$\beta=0$ the update is exactly $P=0$ and the model is unchanged. As $\beta$ grows to
$45$, forget efficacy rises from $0.199$ to $0.866$ while model utility stays at the
unedited level ($0.601\rightarrow0.600$): roughly five sixths of the attainable
forgetting costs no measurable utility. Utility only begins to pay past the tuned
$\beta=65$ and then falls sharply, reaching $0.127$ at $\beta=200$. 

\emph{The retain weight $w_r$ is the term that must be tuned.} It is the only axis whose
two ends both fail outright. Without retain anchoring ($w_r=0$) the edit destroys the
model, driving model utility to $0.000$ and retain ROUGE-L to $0.003$ while forgetting
saturates, which is the degenerate solution the retain term exists to prevent.
Over-weighting it ($w_r=1600$) suppresses the edit instead, and forget efficacy collapses
to $0.367$. Between these, the Final Score stays within $0.03$ of its maximum for
$w_r\in[50,200]$.

\emph{The ridge $\rho$ is a numerical safeguard rather than a tuning knob.} Across
$\rho\in[0.001,0.1]$, two orders of magnitude, the Final Score moves by $0.005$, which is
smaller than the $0.006$ seed spread. Performance degrades only once $\rho\geq0.3$, where
the penalty on $\|P\|_F$ starts to shrink the update itself. In practice $\rho$ can be
set to any small value that keeps $A$ comfortably conditioned.


\begin{table*}[t]
\centering
\small
\setlength{\tabcolsep}{4.5pt}
\renewcommand{\arraystretch}{1.12}
\begin{tabular}{lccccccccc}
\toprule
\multicolumn{10}{l}{\emph{Edit strength} $\beta$ \quad (at $w_r=100$, $\rho=0.03$)} \\
\cmidrule(r){1-10}
$\beta$   & 0 & 10 & 25 & 45 & \cellcolor[HTML]{EAF1FB}65 & 90 & 130 & 200 & 320 \\
MU        & 0.601 & 0.595 & 0.598 & 0.600 & \cellcolor[HTML]{EAF1FB}0.585 & 0.534 & 0.401 & 0.127 & 0.069 \\
F         & 0.199 & 0.601 & 0.817 & 0.866 & \cellcolor[HTML]{EAF1FB}0.906 & 0.937 & 0.957 & 0.975 & 0.984 \\
Final     & 0.400 & 0.598 & 0.708 & 0.733 & \cellcolor[HTML]{EAF1FB}\textbf{0.746} & 0.735 & 0.679 & 0.551 & 0.527 \\
\midrule
\multicolumn{10}{l}{\emph{Retain weight} $w_r$ \quad (at $\beta=65$, $\rho=0.03$)} \\
\cmidrule(r){1-10}
$w_r$     & 0 & 1 & 10 & 25 & 50 & \cellcolor[HTML]{EAF1FB}100 & 200 & 400 & 1600 \\
MU        & 0.000 & 0.000 & 0.036 & 0.295 & 0.508 & \cellcolor[HTML]{EAF1FB}0.585 & 0.605 & 0.597 & 0.598 \\
F         & 0.988 & 0.985 & 0.979 & 0.965 & 0.943 & \cellcolor[HTML]{EAF1FB}0.906 & 0.841 & 0.756 & 0.367 \\
Final     & 0.494 & 0.493 & 0.507 & 0.630 & 0.726 & \cellcolor[HTML]{EAF1FB}\textbf{0.746} & 0.723 & 0.677 & 0.483 \\
\midrule
\multicolumn{10}{l}{\emph{Ridge scale} $\rho$ \quad (at $\beta=65$, $w_r=100$; $\mu=\rho\,\bar{g}$)} \\
\cmidrule(r){1-10}
$\rho$    & 0.001 & 0.003 & 0.01 & \cellcolor[HTML]{EAF1FB}0.03 & 0.1 & 0.3 & 1.0 & & \\
MU        & 0.564 & 0.565 & 0.574 & \cellcolor[HTML]{EAF1FB}0.585 & 0.603 & 0.596 & 0.595 & & \\
F         & 0.918 & 0.916 & 0.912 & \cellcolor[HTML]{EAF1FB}0.906 & 0.885 & 0.838 & 0.758 & & \\
Final     & 0.741 & 0.741 & 0.743 & \cellcolor[HTML]{EAF1FB}\textbf{0.746} & 0.744 & 0.717 & 0.677 & & \\
\bottomrule
\end{tabular}
\caption{One-at-a-time hyperparameter sensitivity of \ours{} on TOFU-10\%. Each block
varies a single hyperparameter and holds the rest at the tuned configuration
(\colorbox[HTML]{EAF1FB}{shaded}). MU is model utility, F is forget efficacy
$\big((1-\text{Rouge-L})+(1-\text{Prob.})+(1-\text{Extr.\ Str.})\big)/3$, and Final is
their mean. Repeated seeds at the tuned setting span $0.014$ in MU, $0.002$ in F, and
$0.006$ in Final.}
\label{tab:hparam-sensitivity}
\end{table*}

\section{Unlearning on TOFU}
\label{app:tofu}

This section describes the evaluation metrics used in our TOFU experiments. We evaluate on the TOFU 5\% and 10\% forget splits.

\paragraph{Answer probability.}
For each example in the retain and forget splits, we measure how likely the model is to generate the reference answer conditioned on the corresponding question. Specifically, for a question $q$ and answer $a$, we compute the length-normalized conditional likelihood
\[
P(a \mid q)^{1/|a|},
\]
where $|a|$ denotes the number of tokens in the answer.

For the real authors and world facts subsets, each question is associated with five candidate answers: one correct answer $a_0$ and four perturbed, incorrect alternatives $\{\tilde{a}_1,\tilde{a}_2,\tilde{a}_3,\tilde{a}_4\}$. To quantify the model's preference for the correct answer, we use the normalized probability of the correct answer among all five candidates,
\[
\frac{P(a_0 \mid q)^{1/|a_0|}}
{P(a_0 \mid q)^{1/|a_0|} + \sum_{i=1}^{4} P(\tilde{a}_i \mid q)^{1/|\tilde{a}_i|}} .
\]

\paragraph{Truth ratio.}
The truth ratio measures the model's relative preference for incorrect answers over a correct paraphrased answer. Let $\hat{a}$ denote a paraphrase of the correct answer and let
$A=\{\tilde{a}_1,\tilde{a}_2,\ldots\}$ be the set of perturbed incorrect answers. We first compute the geometric mean of the length-normalized likelihoods assigned to the perturbed answers, and then divide this value by the length-normalized likelihood assigned to the paraphrased correct answer:
\[
R_{\text{truth}} =
\frac{
\left(\prod_{i=1}^{|A|} P(\tilde{a}_i \mid q)^{1/|\tilde{a}_i|}\right)^{1/|A|}
}{
P(\hat{a} \mid q)^{1/|\hat{a}|}
}.
\]
For the real authors and world facts subsets, paraphrased answers are not provided. In these cases, we use the original correct answer $a$ in place of $\hat{a}$.

As defined, a lower $R_{\text{truth}}$ indicates a stronger preference for the correct answer. The truth-ratio values reported in our tables, and the ones entering the model-utility aggregate, are therefore not $R_{\text{truth}}$ itself but the per-example transform $\max(0,\,1-R_{\text{truth}})$, averaged over the subset, so that higher reported values are better. On the forget split, where a truth ratio close to $1$ is desirable, the evaluation pipeline instead aggregates $\min(R_{\text{truth}},\,1/R_{\text{truth}})$, but our tables report the truth ratio only on the utility subsets.

\paragraph{ROUGE-L.}
For all TOFU subsets, we report ROUGE-L recall~\cite{lin2004rouge} between the ground-truth responses from the forget split and the model generations obtained after unlearning.

\paragraph{Extraction strength.}
We use extraction strength to evaluate how much of an answer that should have been forgotten the model can still complete once it is conditioned on the beginning of that answer. Following the OpenUnlearning implementation, the metric is computed with teacher forcing. The question and reference answer are passed through the model, and at every answer position we record the greedy (argmax) next-token prediction. For a question-answer pair $(q,a)$ with answer tokens $a_1,\dots,a_{|a|}$, let $k\ge 0$ be the smallest number of leading answer positions that must be discarded so that the greedy predictions agree with the reference tokens at all remaining positions. The suffix from position $k+1$ onward is thus the longest tail of the answer that the model reproduces correctly on its own, and the extraction strength is
\[
S_{\text{ext}}(q,a) = 1 - \frac{k}{|a|},
\]
the fraction of the answer recoverable in this way. Intuitively, a small $k$ means an attacker needs to supply only a short prefix of the answer before the model completes the rest verbatim. Higher extraction strength indicates that the target answer remains easier to elicit from the model, which corresponds to weaker unlearning. Lower extraction strength suggests stronger resistance to extraction.

\paragraph{Model utility.}
Finally, we report an aggregate model utility score. This score is computed as the harmonic mean of nine quantities: answer probability, truth ratio, and ROUGE-L recall, each evaluated on the retain, real authors, and world facts subsets. Higher model utility indicates better overall performance after unlearning.

\paragraph{Experimental details.}
Our method performs a single closed-form update per layer and involves no gradient-based training. For both TOFU settings we apply the token-suppression target to the MLP down-projection of a contiguous band of late layers, selected by the logit-lens attribution described in the main text. On TOFU-10\% (LLaMA-3.2-1B-Instruct) we edit layers $11$ through $15$ with edit strength $\beta=65$, retain weight $w_r=100$, and ridge scale $\rho=0.03$. On TOFU-5\% (LLaMA-2-7B-Chat) we edit layers $26$ through $31$ with $\beta=1000$, $w_r=300$, and $\rho=0.03$. In both cases the ridge coefficient is $\mu=\rho\,\bar{g}$, where $\bar{g}$ is the mean diagonal entry of the matrix $A$ in Theorem~\ref{thm:closed-form}, and the layers are edited sequentially with the keys recomputed after each edit.

\paragraph{Layer selection.}
For TOFU we use the token-suppression attribution score from the main text. For each
candidate layer, the score compares the layer's contribution to forget-set gold-token
logits against its contribution to retain-token logits. We then choose a contiguous
late-layer window with the highest average score for the chosen edit width $k$. This gives
layers $11$--$15$ for TOFU-10\% ($k=5$) and layers $26$--$31$ for TOFU-5\% ($k=6$). After
each layer edit, we recompute the keys before editing the next layer so that later updates
are computed on the current edited model.

\paragraph{Observed results.}
On TOFU-5\%, \ours{} obtains a Final Score of $0.79$, improving over the strongest baseline scores in our comparison while also achieving the highest MU score, $0.62$. The forget-set metrics are nearly saturated: $1$-Rouge-L is $0.95$, $1$-Prob. is $1.00$, and $1$-Extraction Strength is $0.97$. At the same time, retain-set quality remains substantially higher than for the strongest forgetting baselines; for example, retain ROUGE-L and retain probability are $0.83$ and $0.78$, respectively, compared with $0.54$ and $0.56$ for SimNPO. This indicates that the edit removes the selected fictitious-author facts without broadly suppressing the neighboring retain distribution.

TOFU-10\% is the more difficult TOFU setting because the forget set is larger and contains a broader set of author-specific associations. In this setting \ours{} obtains the best Final Score, $0.75$, and the best MU score, $0.60$. The method reaches $1$-Prob. of $1.00$ and $1$-Extraction Strength of $0.94$ on the forget set, while retaining the best truth-ratio scores on Real Authors, World Facts, and the retain split. The baseline pattern is instructive: IDKDPO reaches very strong forgetting but drops MU to $0.52$, whereas SimNPO keeps a more balanced profile but reaches a lower Final Score of $0.70$. \ours{} therefore sits at a better operating point on the forgetting--utility frontier. The measured update time is $0.5$ minutes on one H100, compared with $2.5$ minutes for SimNPO and $6.9$ minutes for AltPO and IDKDPO.

\paragraph{Full per-subset results.}
Tables~\ref{tab:tofu5-full} and~\ref{tab:tofu10-full} give the complete breakdown behind
the model-utility column of Table~\ref{tab:tofu}, reporting ROUGE-L, answer
probability, and truth ratio separately on the retain, Real Authors, and World Facts
subsets.

\begin{table*}[t]
\centering
\setlength{\tabcolsep}{3.5pt}
\renewcommand{\arraystretch}{1.15}
\begin{adjustbox}{max width=\textwidth}
\begin{tabular}{lcccccccccccc|ccc}
\toprule
& \multicolumn{3}{c}{Unlearning Efficacy} & \multicolumn{9}{c|}{Utility Preservation} & \multicolumn{3}{c}{Summary} \\
\cmidrule(lr){2-4} \cmidrule(lr){5-13} \cmidrule(lr){14-16}
Method &
\multicolumn{3}{c}{Forget Set} &
\multicolumn{3}{c}{Real Authors} &
\multicolumn{3}{c}{World Facts} &
\multicolumn{3}{c|}{Retain Set} &
\multirow{2}{*}{\centering MU ($\uparrow$)} &
\multirow{2}{*}{\centering Final Score ($\uparrow$)} &
\multirow{2}{*}{\centering Time (m) ($\downarrow$)}
\\
\cmidrule(lr){2-4} \cmidrule(lr){5-7} \cmidrule(lr){8-10} \cmidrule(lr){11-13}
& 1-Rouge-L$\uparrow$ & 1-Prob.$\uparrow$ & 1-Extr.\ Strength$\uparrow$ &
Rouge-L$\uparrow$ & Prob.$\uparrow$ & Truth ratio$\uparrow$ &
Rouge-L$\uparrow$ & Prob.$\uparrow$ & Truth ratio$\uparrow$ &
Rouge-L$\uparrow$ & Prob.$\uparrow$ & Truth ratio$\uparrow$ &
\\
\midrule
Original & 0.04 & 0.01 & 0.05 & 0.93 & 0.44 & 0.58 & 0.91 & 0.43 & 0.55 & 0.98 & 0.99 & 0.48 & 0.62 & 0.33 & --- \\
Retain   & 0.61 & 0.85 & 0.93 & 0.92 & 0.44 & 0.57 & 0.90 & 0.43 & 0.54 & 0.97 & 0.99 & 0.48 & 0.62 & 0.71 & --- \\ \hline

GradDiff & \redc 1.00 & \redc 1.00 & \orangec 0.96 & 0.59 & \redc 0.59 & \redc 0.81 & \orangec 0.88 & 0.46 & 0.59 & 0.42 & 0.49 & 0.48 & 0.56 & \orangec 0.77 & 2.4 \\
IDKDPO   & \orangec 0.98 & 0.40 & 0.85 & 0.65 & 0.48 & 0.63 & 0.82 & 0.44 & 0.55 & 0.55 & \redc 0.86 & \orangec 0.57 & \yellowc 0.57 & 0.66 & 3.3 \\
RKLD     & 0.69 & \yellowc 0.96 & \yellowc 0.92 & \redc 0.92 & 0.47 & 0.61 & \yellowc 0.87 & 0.47 & 0.58 & \yellowc 0.58 & 0.52 & \yellowc 0.56 & 0.56 & 0.71 & 2.9 \\
NPO      & 0.73 & 0.94 & 0.90 & \orangec 0.91 & \yellowc 0.50 & 0.62 & \redc 0.90 & \redc 0.50 & \orangec 0.61 & 0.47 & 0.51 & \orangec 0.57 & \yellowc 0.57 & 0.71 & 2.9 \\
SimNPO   & 0.74 & \orangec 0.97 & \yellowc 0.92 & \yellowc 0.90 & \yellowc 0.50 & \yellowc 0.64 & \redc 0.90 & \yellowc 0.48 & \yellowc 0.60 & 0.54 & 0.56 & \redc 0.58 & \orangec 0.58 & \yellowc 0.73 & 2.6 \\ \hdashline
ROME     & 0.82 & 0.69 & 0.90 & 0.62 & 0.44 & 0.56 & 0.68 & 0.44 & 0.58 & \orangec 0.67 & \orangec 0.84 & 0.48 & \yellowc 0.57 & 0.68 & \redc 0.6 \\
MEMIT    & 0.73 & 0.69 & 0.89 & 0.58 & \orangec 0.51 & \orangec 0.65 & 0.66 & \orangec 0.49 & \redc 0.62 & 0.45 & 0.50 & 0.43 & 0.53 & 0.65 & \orangec 0.7 \\ \hline
\ours{}  & \yellowc 0.95 & \redc 1.00 & \redc 0.97 & 0.72 & 0.49 & \orangec 0.65 & 0.85 & 0.46 & \redc 0.62 & \redc 0.83 & \yellowc 0.78 & 0.47 & \redc 0.62 & \redc 0.79 & \yellowc 1.8 \\

\bottomrule
\end{tabular}
\end{adjustbox}
\caption{Results on TOFU-5\% (LLaMA2-7B-Chat). Colors indicate rank (red: best, orange: second, yellow: third). Final Score is computed as $\frac{1}{2}\left(\mathrm{MU} + \frac{(1\text{-Rouge-L}) + (1\text{-Prob.}) + (1\text{-Extr.\ Strength})}{3}\right)$. Rows below the dashed rule are the locate-then-edit knowledge editors, which apply faster on this small forget set but reach a substantially worse trade-off. \ours{} achieves the best forgetting-utility trade-off.}

\label{tab:tofu5-full}
\end{table*}

\begin{table*}[t]
\centering
\setlength{\tabcolsep}{3.5pt}
\renewcommand{\arraystretch}{1.15}
\begin{adjustbox}{max width=0.95\textwidth}
\begin{tabular}{lcccccccccccc|ccc}
\toprule
& \multicolumn{3}{c}{Unlearning Efficacy} & \multicolumn{9}{c|}{Utility Preservation} & \multicolumn{3}{c}{Summary} \\
\cmidrule(lr){2-4} \cmidrule(lr){5-13} \cmidrule(lr){14-16}
Method &
\multicolumn{3}{c}{Forget Set} &
\multicolumn{3}{c}{Real Authors} &
\multicolumn{3}{c}{World Facts} &
\multicolumn{3}{c|}{Retain Set} &
\multirow{2}{*}{\centering MU ($\uparrow$)} &
\multirow{2}{*}{\centering Final Score ($\uparrow$)} &
\multirow{2}{*}{\centering Time (m) ($\downarrow$)}
\\
\cmidrule(lr){2-4} \cmidrule(lr){5-7} \cmidrule(lr){8-10} \cmidrule(lr){11-13}
& 1-Rouge-L$\uparrow$ & 1-Prob.$\uparrow$ & 1-Extr.\ Strength$\uparrow$ &
Rouge-L$\uparrow$ & Prob.$\uparrow$ & Truth ratio$\uparrow$ &
Rouge-L$\uparrow$ & Prob.$\uparrow$ & Truth ratio$\uparrow$ &
Rouge-L$\uparrow$ & Prob.$\uparrow$ & Truth ratio$\uparrow$
\\
\midrule
Original & 0.18 & 0.12 & 0.29 & 0.80 & 0.41 & 0.53 & 0.83 & 0.44 & 0.62 & 0.79 & 0.87 & 0.52 & 0.60 & 0.40 & --- \\
Retain   & 0.62 & 0.88 & 0.94 & 0.83 & 0.39 & 0.50 & 0.80 & 0.43 & 0.62 & 0.83 & 0.88 & 0.51 & 0.59 & 0.70 & --- \\ \hline

RMU      & 0.50 & 0.39 & 0.72 & \yellowc 0.75 & \yellowc 0.42 & 0.52 & \orangec 0.83 & 0.43 & 0.60 & 0.61 & 0.74 & \yellowc 0.51 & \yellowc 0.57 & 0.55 & \yellowc 0.6 \\
AltPO    & 0.66 & \yellowc 0.93 & \orangec 0.95 & \redc 0.78 & \yellowc 0.42 & \yellowc 0.54 & \yellowc 0.80 & 0.43 & \yellowc 0.61 & 0.61 & 0.75 & 0.47 & \yellowc 0.57 & \yellowc 0.71 & 6.9 \\
GradDiff & 0.42 & 0.35 & 0.67 & 0.73 & 0.40 & 0.53 & \redc 0.84 & 0.42 & \orangec 0.62 & \redc 0.79 & \redc 0.88 & \orangec 0.53 & \orangec 0.59 & 0.53 & 0.8 \\
IDKDPO   & \yellowc 0.87 & \redc 1.00 & \redc 0.96 & 0.41 & \yellowc 0.42 & 0.53 & 0.68 & 0.43 & 0.58 & 0.62 & 0.75 & 0.50 & 0.52 & \orangec 0.73 & 6.9 \\
IDKNLL   & \redc 0.98 & 0.45 & 0.74 & 0.70 & 0.39 & 0.49 & 0.73 & 0.43 & 0.57 & \yellowc 0.66 & \yellowc 0.78 & 0.49 & 0.55 & 0.64 & 0.8 \\
UNDIAL   & 0.69 & 0.82 & \redc 0.96 & 0.50 & 0.38 & 0.48 & 0.78 & 0.41 & 0.56 & 0.56 & 0.61 & 0.46 & 0.51 & 0.67 & 0.9 \\
NPO      & 0.61 & 0.71 & 0.91 & \orangec 0.76 & 0.41 & 0.52 & 0.79 & 0.43 & 0.60 & \yellowc 0.66 & \yellowc 0.78 & 0.50 & \yellowc 0.57 & 0.66 & 3.7 \\
SimNPO   & 0.65 & \orangec 0.94 & \yellowc 0.94 & \redc 0.78 & \yellowc 0.42 & 0.53 & \orangec 0.83 & \orangec 0.45 & \yellowc 0.61 & 0.56 & 0.71 & 0.48 & 0.56 & 0.70 & 2.5 \\ \hdashline
ROME     & \redc 0.98 & 0.37 & 0.60 & 0.62 & \yellowc 0.42 & \yellowc 0.54 & 0.76 & \yellowc 0.44 & \yellowc 0.61 & 0.45 & \orangec 0.79 & \yellowc 0.51 & 0.54 & 0.60 & \redc 0.4 \\
MEMIT    & \orangec 0.88 & 0.38 & 0.61 & 0.40 & \orangec 0.44 & \orangec 0.56 & 0.61 & \redc 0.47 & \yellowc 0.61 & 0.31 & 0.72 & 0.50 & 0.49 & 0.56 & \orangec 0.5 \\ \hline
\ours{}  & 0.78 & \redc 1.00 & \yellowc 0.94 & 0.63 & \redc 0.47 & \redc 0.63 & 0.71 & \redc 0.47 & \redc 0.68 & \orangec 0.71 & 0.77 & \redc 0.54 & \redc 0.60 & \redc 0.75 & \orangec 0.5 \\
\bottomrule
\end{tabular}
\end{adjustbox}
\caption{Results on TOFU-10\%. Colors indicate rank (red: best, orange: second, yellow: third). Final Score is computed as $\frac{1}{2}\left(\mathrm{MU} + \frac{(1\text{-Rouge-L}) + (1\text{-Prob.}) + (1\text{-Extr.\ Strength})}{3}\right)$. Rows below the dashed rule are the locate-then-edit knowledge editors. \ours{} achieves the best overall forgetting-utility trade-off, attaining the highest Final Score and model utility (MU) while avoiding the severe utility degradation seen in several stronger-forgetting baselines. ROME edits marginally faster on this small forget set, but at a far worse trade-off.}
\label{tab:tofu10-full}

\end{table*}

\section{Unlearning on MUSE}
\label{app:muse}

MUSE~\citep{shi2024muse} evaluates unlearning in long-form domains using memorization and privacy-leakage metrics computed on a forget split $\mathcal{D}_f$ and a retain split $\mathcal{D}_r$. We summarize the metrics used in our experiments below.

\paragraph{VerbMem.}
VerbMem (verbatim memorization) measures how much the model reproduces the forget text word for word. For each forget example the model is prompted to continue a prefix, and VerbMem scores the overlap between the continuation and the reference forget span, following the longest-common-subsequence criterion of~\citep{shi2024muse}. Lower VerbMem indicates less verbatim regurgitation of the forget content and is therefore preferred.

\paragraph{KnowMem.}
KnowMem (knowledge memorization) measures whether the model still expresses the underlying facts contained in the forget data, even when it does not reproduce them word for word. MUSE evaluates the model on question-style probes derived from the forget documents and checks whether the responses contain the target facts, using the automatic matching procedure of~\citep{shi2024muse}. We report KnowMem on both splits. On the forget split a lower value indicates better forgetting, whereas on the retain split a higher value indicates better preservation of non-forget knowledge.

\paragraph{PrivLeak.}
PrivLeak is a privacy-leakage proxy derived from the Min-$K\%$ Prob membership-inference attack \citep{shi2024detecting}. It measures how well an attacker can distinguish forget examples from a holdout set $\mathcal{D}_{\mathrm{holdout}}$ using model likelihood statistics. The holdout set is not the retain set. It is a disjoint set used as a non-member reference distribution for the membership test. PrivLeak is defined relative to a retraining baseline as
\begin{equation}
\begin{aligned}
\mathrm{PrivLeak}
&=
\frac{A_{\mathrm{unlearn}} - A_{\mathrm{retrain}}}{A_{\mathrm{retrain}}}
\times 100,
\\
A_m
&=
\mathrm{AUC}\!\left(
f_m,\,\mathcal{D}_f,\right.
\\[-0.2em]
&\qquad\left.
\mathcal{D}_{\mathrm{holdout}}
\right),
\\[-0.2em]
&\qquad
m \in \{\mathrm{unlearn},\mathrm{retrain}\},
\end{aligned}
\end{equation}
where $\mathrm{AUC}(\cdot)$ is the area under the ROC curve for separating samples of $\mathcal{D}_f$ and $\mathcal{D}_{\mathrm{holdout}}$ using Min-$K\%$ Prob features. A PrivLeak value closer to $0$ is better, indicating that the unlearned model approaches the retraining baseline in membership distinguishability.

\paragraph{Experimental details.}
For both MUSE corpora we apply the token-suppression target to the MLP down-projection of layers $28$ through $31$ of the released MUSE target model, with the suppression directions given by the logit-lens unembedding rows as in the main text. On MUSE News we use edit strength $\beta=370$ and retain weight $w_r=10$, and on MUSE Books we use $\beta=350$ and $w_r=80$, with ridge scale $\rho=0.03$ in both cases. The larger retain weight on Books reflects its more entangled forget and retain domains, in which a stronger retain anchor is needed to preserve closely related permissible knowledge.

\paragraph{Layer selection.}
For MUSE we use the same token-suppression layer-selection procedure as in TOFU. The
logit-lens attribution is computed on long-form forget and retain examples, and we select
the highest-scoring contiguous late-layer band with width $k=4$. This selects layers
$28$--$31$ for both MUSE News and MUSE Books. We keep the layer band fixed across the two
MUSE variants so that the comparison isolates the effect of the corpus and retain weight;
the Books run uses a larger retain weight because its retain examples are semantically
closer to the forget corpus.

\paragraph{Observed results.}
On MUSE News, \ours{} obtains the best Final Score among the unlearning methods, $55.93$, compared with $49.81$ for SimNPO. The most important change is privacy leakage: \ours{} drives PrivLeak to $-3.80$, close to the retraining reference value of $0$, whereas NPO and GradDiff leave large positive leakage values above $100$. Although GA obtains lower VerbMem and KnowMem on the forget set, it collapses retain KnowMem to $0.00$, so its Final Score remains lower. This illustrates the main MUSE trade-off: aggressively suppressing the forget documents is not sufficient if the method also destroys in-domain news utility.

On MUSE Books, the retain and forget distributions are more entangled because the retain material concerns the same fictional universe as the forget corpus. In this setting \ours{} obtains a Final Score of $76.20$, ahead of SimNPO at $70.83$, and preserves the highest retain KnowMem among unlearning methods, $65.56$. It also reduces VerbMem on the forget set from the original model's $99.56$ to $2.40$ and drives PrivLeak to $-0.22$, again close to the retraining target of $0$. Several baselines achieve zero forget-set memorization metrics, but they do so by sharply damaging retain utility; for example, GA and GradDiff reduce retain KnowMem to $0.00$ and $0.13$. The Books result therefore emphasizes why we tune the retain anchor more strongly on this benchmark.

The MUSE runtime gap is also large. The closed-form edit takes $0.3$ minutes for both MUSE News and MUSE Books, while the gradient baselines range from $18.0$ to $42.4$ minutes on News and from $26.7$ to $62.5$ minutes on Books under the same single-H100 timing convention. Thus the best MUSE Final Scores are obtained without an iterative fine-tuning run.

\section{Unlearning on WMDP}
\label{app:wmdp}

WMDP~\citep{wmdp} evaluates targeted capability suppression rather than memorization removal. The forget set is the WMDP-Bio subset of biosecurity-related multiple-choice questions, and the utility evaluation uses MMLU~\citep{hendrycks2020measuring} as a broad general-knowledge proxy. We follow the evaluation protocol of prior unlearning work on WMDP~\citep{simnpo}.

\paragraph{Question format.}
Each WMDP item is a multiple-choice question with a fixed set of answer options. We present the question and its options in a standard instruction prompt and score the model by the answer option to which it assigns the highest likelihood. We use the official WMDP-Bio split provided by the benchmark.

\paragraph{Forgetting and utility.}
We measure forgetting as the drop in WMDP-Bio accuracy after unlearning. Let $\mathrm{AccBio}$ denote the fraction of WMDP-Bio questions answered correctly. Following the main text, we report $1-\mathrm{AccBio}$ as the forgetting score, where larger values indicate stronger suppression of the hazardous slice. To quantify retained general capability we report overall MMLU accuracy under the same highest-likelihood decoding rule, where higher is better.

\paragraph{Experimental details.}
Because WMDP probes knowledge through multiple-choice accuracy rather than free generation, we use the representation-corruption target of the main text rather than token suppression. We apply the closed-form update to the MLP down-projection of a single mid layer, layer $8$ of LLaMA-3-8B-Instruct, with corruption strength $c=45$, retain weight $w_r=1000$, and ridge scale $\rho=0.03$. The forget keys are collected from the WMDP-Bio forget corpus and the retain keys from a generic WikiText corpus, which we found essential for preserving MMLU. As discussed in the main text, we select the mid layer following the established convention for representation-level unlearning on WMDP.

\paragraph{Layer selection.}
For WMDP we use a representation-corruption edit rather than token suppression, so the
logit-lens token attribution used for TOFU and MUSE is not the selection criterion. We edit
a single mid layer, layer $8$, matching the layer range commonly used for
representation-level WMDP unlearning. The edit is applied to the MLP
\texttt{down\_proj}, and the retain keys are taken from WikiText to anchor general
language-model behavior while the WMDP-Bio keys receive the corruption target.

\paragraph{Observed results.}
On WMDP-Bio, \ours{} obtains a Final Score of $0.63$, the highest among the compared unlearning methods. SimNPO has the strongest raw forgetting score, with $1-\mathrm{AccBio}=0.75$, but its MMLU accuracy is $0.44$. \ours{} gives slightly less suppression, $1-\mathrm{AccBio}=0.71$, while preserving substantially higher MMLU accuracy, $0.55$. This difference is the reason \ours{} has the best overall trade-off despite not maximizing the forgetting metric alone.

The single-layer closed-form edit reduces WMDP-Bio accuracy while keeping broad utility
closer to the original model. The update takes $0.2$ minutes, compared with roughly
$20$--$24$ minutes for the gradient baselines.

\section{Unlearning on ZsRE}
\label{app:zsre}

ZsRE~\citep{levy2017zeroshot} is a few-shot factual editing benchmark. We use it in the
reverse direction from TOFU, MUSE, and WMDP: instead of asking whether knowledge editors
perform well on broad unlearning benchmarks, we ask whether \ours{} remains competitive in
the fact-editing harness used by ZeroUnlearn~\citep{lin2026zerounlearn}. This makes the
comparison favorable to the locate-then-edit baselines, since ROME, MEMIT, AlphaEdit, and
ZeroUnlearn are designed for this style of localized fact intervention.

\paragraph{Task format.}
Each ZsRE record contains a requested rewrite with a subject, a prompt template, and the
original answer, together with paraphrased prompts and neighborhood prompts. In a standard
editing setup the method would replace the original answer with a new target. For
unlearning, we instead treat the original answer as the content to remove: the edit should
make the model stop predicting that answer on the original prompt and on paraphrases,
while preserving answers to the neighborhood prompts.

\paragraph{Metrics.}
We follow the released ZeroUnlearn evaluation harness. \emph{Efficacy} is the percentage
of rewrite prompts on which the edited model still predicts the original answer tokens, so
lower values indicate stronger unlearning. \emph{Generalization} is the same measurement
on paraphrased prompts and tests whether the deletion transfers beyond the exact wording
of the request. \emph{Specificity} is accuracy on neighborhood prompts that should remain
unchanged, so higher values indicate better locality preservation. The main text reports
mean and standard deviation across runs.

\paragraph{Experimental details.}
We use the same few-shot protocol as the ZeroUnlearn harness: each run unlearns $50$ ZsRE
facts and uses $1000$ disjoint facts as the retain set. Results are averaged over ten
seeds. The forget keys use the fact-editing \texttt{subject\_last} convention, i.e., one
key is collected at the final subject token rather than at every answer token. This
matches the ROME/MEMIT/ZeroUnlearn convention and gives the closed-form system a retain
anchor in the same feature space as the editor baselines.

For ZsRE, the best \ours{} configuration is head-only. We set the MLP suppression strength
to zero, edit the untied LM head with the token-suppression target, use specificity
reweighting over the original-answer tokens, and keep the retain anchor on the disjoint
retain facts. For both evaluated models we use head strength $\beta_{\mathrm{head}}=20$,
retain weight $w_r=30$, ridge scale $\rho=0.03$, and no additional WikiText retain
anchor. The layer lists in the configuration are retained only for compatibility with the
shared harness; because $\beta_{\mathrm{MLP}}=0$, the MLP updates are exactly zero.

\paragraph{Observed results.}
Table~\ref{tab:zsre} shows that \ours{} suppresses the original answers much
more strongly than the locate-then-edit methods while preserving locality. The improvement
appears on both direct rewrite prompts and paraphrases, indicating that the edit is not
simply breaking one prompt surface form. Specificity remains at the level of the strongest
editing baselines on the smaller model and is highest in the table on the larger model.
This supports the main conclusion that the closed-form suppression edit is not only a
broad benchmark method: it also works inside the few-shot factual-unlearning setting for
which the editor baselines were tuned.

\section{Benchmark and Evaluation Summary}
\label{app:summary}

Table~\ref{tab:benchmarks} summarizes the target model and evaluation metrics used
for each benchmark. We separate metrics that measure forgetting on the forget set
from metrics that measure utility preservation on retain or held-out evaluation
sets. Arrows indicate the desired direction after unlearning.

\begin{table*}[t]
\centering
\footnotesize
\setlength{\tabcolsep}{4pt}
\renewcommand{\arraystretch}{1.15}
\begin{adjustbox}{max width=\textwidth}
\begin{tabular}{@{} l l l l @{}}
\toprule
\textbf{Benchmark}
&
\textbf{Target Model}
&
\textbf{Unlearning Effectiveness}
&
\textbf{Utility Preservation}
\\
\midrule

TOFU
&
\makecell[l]{LLaMA-2-7B-Chat\\LLaMA-3.2-1B-Instruct}
&
\begin{tabular}[t]{@{}l@{\hspace{4pt}}c@{}}
Probability on $\mathcal{D}_f$ & $\downarrow$\\
ROUGE-L on $\mathcal{D}_f$ & $\downarrow$\\
Extraction strength on $\mathcal{D}_f$ & $\downarrow$
\end{tabular}
&
\begin{tabular}[t]{@{}l@{\hspace{4pt}}c@{}}
Model utility & $\uparrow$\\
Probability on $\mathcal{D}_r$, Real Authors, World Facts & $\uparrow$\\
ROUGE-L on $\mathcal{D}_r$, Real Authors, World Facts & $\uparrow$\\
Truth ratio on $\mathcal{D}_r$, Real Authors, World Facts & $\uparrow$
\end{tabular}
\\
\midrule

MUSE
&
\makecell[l]{LLaMA-2-7B\\ICLM-7B}
&
\begin{tabular}[t]{@{}l@{\hspace{4pt}}c@{}}
KnowMem on $\mathcal{D}_f$ & $\downarrow$\\
VerbMem on $\mathcal{D}_f$ & $\downarrow$\\
PrivLeak & $\to 0$
\end{tabular}
&
\begin{tabular}[t]{@{}l@{\hspace{4pt}}c@{}}
KnowMem on $\mathcal{D}_r$ & $\uparrow$
\end{tabular}
\\
\midrule

WMDP
&
\makecell[l]{Llama-3-8B-Instruct}
&
\begin{tabular}[t]{@{}l@{\hspace{4pt}}c@{}}
Accuracy on WMDP-Bio & $\downarrow$
\end{tabular}
&
\begin{tabular}[t]{@{}l@{\hspace{4pt}}c@{}}
Accuracy on MMLU & $\uparrow$
\end{tabular}
\\
\midrule

ZsRE
&
\makecell[l]{LLaMA-3.2-3B\\LLaMA-3.1-8B}
&
\begin{tabular}[t]{@{}l@{\hspace{4pt}}c@{}}
Rewrite orig.-answer acc. & $\downarrow$\\
Paraphrase orig.-answer acc. & $\downarrow$
\end{tabular}
&
\begin{tabular}[t]{@{}l@{\hspace{4pt}}c@{}}
Neighborhood acc. & $\uparrow$
\end{tabular}
\\
\bottomrule
\end{tabular}
\end{adjustbox}
\caption{Benchmark summary. For each benchmark, we report the target model and the metrics used to evaluate unlearning effectiveness and utility preservation.}
\label{tab:benchmarks}
\end{table*}

\paragraph{Runtime protocol.}
All runtime values reported in the experimental tables are wall-clock update times measured
on a single NVIDIA H100. The timing excludes model loading, tokenizer loading, dataset
preprocessing, and the final benchmark evaluation. For gradient-based baselines, the timer
starts immediately before the unlearning optimization loop and ends after the final
optimizer step. Thus the reported time includes the forward and backward passes, optimizer
updates, and any gradient-checkpointing overhead used by the released recipe. For
\ours{}, the timer starts immediately before the closed-form edit procedure and ends after
the edited weights have been written back to the model. This includes collecting the
forget and retain activations used as keys, constructing the ridge-regularized normal
equations, solving for the update, and applying the update to each selected layer. It does
not include the subsequent evaluation pass used to compute TOFU, MUSE, WMDP, ZsRE, or MMLU
metrics.
The baseline budgets behind these numbers follow the released reference recipe of each
benchmark: ten epochs at effective batch $32$ on TOFU-5\%, ten epochs at effective batch
$64$ with gradient checkpointing on MUSE, and $500$ steps at effective batch $4$ on WMDP,
the last following the released OPTML recipes for NPO, GradDiff and
SimNPO~\citep{simnpo}. In every case \ours{} performs a single closed-form edit.

\begin{table*}[t]
\centering
\footnotesize
\setlength{\tabcolsep}{5pt}
\renewcommand{\arraystretch}{1.15}
\begin{adjustbox}{max width=\textwidth}
\begin{tabular}{@{} l l l l l @{}}
\toprule
\textbf{Benchmark}
&
\textbf{Target model}
&
\textbf{Edited module}
&
\textbf{Edited layers}
&
\textbf{Selection rule}
\\
\midrule
TOFU-10\%
&
LLaMA-3.2-1B-Instruct
&
MLP \texttt{down\_proj}
&
$11$--$15$ ($k=5$)
&
Contiguous late-layer band with the largest average logit-lens attribution score.
\\
TOFU-5\%
&
LLaMA-2-7B-Chat
&
MLP \texttt{down\_proj}
&
$26$--$31$ ($k=6$)
&
Contiguous late-layer band with the largest average logit-lens attribution score.
\\
MUSE News
&
LLaMA-2-7B
&
MLP \texttt{down\_proj}
&
$28$--$31$ ($k=4$)
&
Contiguous late-layer band with the largest average logit-lens attribution score.
\\
MUSE Books
&
ICLM-7B
&
MLP \texttt{down\_proj}
&
$28$--$31$ ($k=4$)
&
Same MUSE layer band; retain weight is increased for the more entangled Books setting.
\\
WMDP-Bio
&
LLaMA-3-8B-Instruct
&
MLP \texttt{down\_proj}
&
$8$ ($k=1$)
&
Single mid-layer representation edit, following the WMDP representation-unlearning setup.
\\
ZsRE
&
\makecell[l]{LLaMA-3.2-3B-Instruct\\LLaMA-3.1-8B-Instruct}
&
LM head
&
---
&
Head-only closed-form token suppression using the fact-editing
\texttt{subject\_last} key convention.
\\
\bottomrule
\end{tabular}
\end{adjustbox}
\caption{Layer-selection summary. Layer numbers use the model indexing convention in our implementation. For token-suppression benchmarks, $k$ is the edit width and the selected band maximizes the average logit-lens attribution score over candidate late layers.}
\label{tab:layer-selection}
\end{table*}

\section{Additional Reproducibility Details}
\label{app:reproducibility}

\paragraph{Code and evaluation pipeline.}
Our implementation consists of scripts for collecting forget and retain activations,
constructing the closed-form update, applying the edited weights, and launching the
standard benchmark evaluations. The code, the configuration file of every reported
setting, and the command lines needed to reproduce the reported experiments are available
at \url{https://github.com/Batorskq/GROM} under the MIT license. All reported benchmark
scores are computed with the
OpenUnlearning evaluation pipeline~\citep{dorna2025openunlearning}, except for ZsRE,
which is evaluated in the ZeroUnlearn harness~\citep{lin2026zerounlearn}. We use the
benchmark datasets and target checkpoints provided by TOFU, MUSE, WMDP, ZsRE, and the
corresponding released evaluation harnesses rather than introducing any new dataset.

\paragraph{Software and hardware.}
Experiments were run in a Linux HPC environment on a single NVIDIA H100 GPU. The software
environment used Python~3.11, PyTorch~2.4.1, Transformers~4.51.3, and an editable
installation of OpenUnlearning. Model loading and evaluation use the Hugging Face
Transformers stack with CUDA acceleration. Unless otherwise specified by the corresponding
benchmark recipe, model computations use the precision of the released target checkpoint
and the OpenUnlearning evaluation configuration.

\paragraph{Randomness and number of runs.}
The closed-form edit is deterministic once the forget and retain examples, target
directions, edited layers, and scalar hyperparameters are fixed. For the WMDP
representation-corruption target, the random direction is generated once with seed $0$ and
then held fixed. We use the default seed $0$ in the OpenUnlearning evaluation.

\paragraph{Use of LLM assistance.}
Large language models were used solely for editorial and auxiliary support, including
improving clarity, grammar, and presentation, and providing assistance with implementation
code. All core technical contributions, experimental design decisions, analyses,
interpretations, and final research judgments were made by the authors.

\end{document}